\providecommand{\abs}[1]{\lvert#1\rvert}
\DeclarePairedDelimiter\ceil{\lceil}{\rceil}
\DeclareMathOperator*{\argmax}{arg\,max}
\DeclareMathOperator*{\argmin}{arg\,min}
\newtheorem{definition}{Definition}
\newtheorem{lemma}{Lemma}
\title{Classification based on Topological Data Analysis}
\author{
 Rolando Kindelan \\
  Computer Science Department\\ 
  Faculty of Mathematical and Physical Sciences\\
  University of Chile\\ 
  851 Beauchef Av. Santiago de Chile, Chile. \\  
  Center of Medical Biophysics,\\
  Universidad de Oriente, Santiago de Cuba, Cuba.\\
  \texttt{rkindela@dcc.uchile.cl} \\
   \And
 José Frías \\
  Center for Research in Mathematics\\
  Jalisco S/N, Col. Valenciana\\ 
  CP: 36023 Guanajuato, Gto., México\\
  \texttt{friaas@matem.unam.mx} \\
  \And
 Mauricio Cerda \\
  Integrative Biology Program\\
  Institute of Biomedical Sciences\\ 
  Biomedical Neuroscience Institute\\
  Center for Medical Informatics and Telemedicine\\
  Faculty of Medicine\\ 
  Universidad de Chile\\
  1027 Independencia Av., Santiago, Chile.\\
  \texttt{mauricio.cerda@uchile.cl} \\
  \And
 Nancy Hitschfeld \\
  Computer Science Department\\ 
  Faculty of Mathematical and Physical Sciences\\
  University of Chile\\ 
  851 Beauchef Av. Santiago de Chile, Chile \\
  \texttt{nancy@dcc.uchile.cl} \\
}
\begin{document}

\maketitle
\begin{abstract}
Topological Data Analysis (TDA) is an emergent field that aims to discover topological information hidden in a dataset. TDA tools have been commonly used to create filters and topological descriptors to improve Machine Learning (ML) methods. This paper proposes an algorithm that applies TDA directly to multi-class classification problems, even imbalanced datasets, without any further ML stage. The proposed algorithm built a filtered simplicial complex on the dataset. Persistent homology is then applied to guide choosing a sub-complex where unlabeled points obtain the label with most votes from labeled neighboring points. To assess the proposed method, 8 datasets were selected with several degrees of class entanglement, variability on the samples per class, and dimensionality. On average, the proposed TDABC method was capable of overcoming baseline classifiers (wk-NN and k-NN) in each of the computed metrics, especially on classifying entangled and minority classes.
\end{abstract}


\section{Introduction}
\label{sec:intro}

The processing and extraction of information from large and noisy data sets is a challenging problem in Computer Science. The techniques of algebraic topology have gained the attention of scientists for years, giving rise to an emerging research field called Topological Data Analysis (TDA)~\cite{Carlsson:Bulletin, EdelsbrunnerHarer2010}. TDA is an approach to infer the topology underlying a dataset by using combinatorial algebraic structures known as simplicial complexes. TDA also involves the computation of invariant properties from continuous transformations of these simplicial complexes: a process known as persistent homology~\cite{EdelsbrunnerHarer2010}.\par
Over several decades, the high dimensionality of datasets coupled with the combinatorial and continuous character of topology have been problematic issues, making computing persistent homology a challenge that has been addressed by several authors.~Edelsbrunner et al.~\cite{EdelsbrunnerHarer2010} present an efficient algorithm and its visualization as a persistence diagram~\cite{EdelsbrunnerHarer2010, zomorodian_2005}. Carlsson et al.~\cite{Carlsson:Bulletin} strengthened the mathematical foundations and also proposed another visualization tool called persistence Barcodes~\cite{Ghrist2008, Carlsson:Bulletin}. Further developments in the TDA field are derived from those initial works. \par 
As a consequence of their combinatorial nature, the construction and representation of simplicial complexes also represent a challenge. Many works have dealt with efficient construction and representation of filtered simplicial complexes. Data structures and algorithms have been developed~\cite{DBLP:journals/talg/BoissonnatS18, Zom2010, DBLP:journals/algorithmica/BoissonnatST17, DBLP:journals/algorithmica/BoissonnatM14}, and they have mainly focused on the construction of \v{C}ech, Rips and other kinds of simplicial complexes such as: Witness, Alpha, Delaunay, Tangent, and Cover complexes. Theoretical and practical results have been organized as TDA libraries:~GUDHI~\cite{DBLP:journals/algorithmica/BoissonnatM14, gudhi2014}, Dionysus, Ripser, Dipha, Perseus and JavaPlex. A complete benchmark of those libraries can be found in~\cite{Otter2017}.\par


 Regarding the use of TDA for classification, a TDA-based method was used in~\cite{tdaretina} for classifying high-resolution diabetic retinopathy images. They used a preprocessing stage for computing persistent homology to detect topological features encoded into persistence diagrams. A support vector machine (SVM) was used to classify the images according to the persistence descriptors which were used to discriminate between diabetic and healthy patients. \par
Moreover, TDA has been applied to time-series analyses~\cite{TDAapp2017}. One common pipeline is to consider the time-series as a dynamic system and compute the attractors or time-variant of the signal, which creates a manifold around the attractors and turns the signal into phase-domain~\cite{DBLP:journals/corr/VenkataramanRT16, umeda2017}. Persistent Homology or another TDA-tool is applied on these phase-space manifold to create topological descriptors~\cite{timeserie_tda_2016}, and as a final step, a machine learning method is applied such as k-NN, CNN, or SVM. Recently, TDA has been applied in Deep learning to address the interpretability problem~\cite{GunnarGab2018},  to regularize loss functions~\cite{gabrielsson20a}, and to build a persistence layer to consider topological information during learning~\cite{persistencelayer2017, gabrielsson20a}. \par

What all those examples of TDA-applications have in common is that TDA has been used as a preprocessing stage of conventional Machine Learning (ML) algorithms. However, during the TDA pipeline execution, multi-scale relationships among data occur and disappear. From the moment when a multi-scale relationship occurs until it is mixed with another one, it is called persistence. The persistence of many of those relationships is captured and represented by persistence diagrams or barcodes.~Taking advantage of the entire TDA pipeline and not just the result could help address some of the current challenges of supervised and semi-supervised learning, such as imbalanced data classification, identification, and correction of mislabeled data; missing data analysis; and dimensionality reduction.  \par

In this scenario, this paper proposes a methodology to make a TDA pipeline that is able to classify balanced and imbalanced datasets with no ML further stage. The fundamental idea is to provide neighborhoods on a filtered simplicial complex related to a point set (a simplex) or a point as a special case. Those neighborhoods will be, in fact, a sub-complex of the filtered simplicial complex built on the dataset. Persistent homology is used to guide the detection of an appropriate sub-complex from the entire filtration. A labeling process is then made to propagate labels from labeled points to unlabeled points, taking advantage of the simplicial relationships. \par

To illustrate, the proposed method is compared with several baseline classifiers. One of the baseline algorithms is the k-NN algorithm, one of the most popular supervised classification methods. The second baseline method is an enhanced version of k-NN, the weighted k-NN (wk-NN) especially suited for imbalanced datasets. This document is organized into several sections. Section~\ref{scc:methods} presents the mathematical foundations used in this work. Section~\ref{scc:classifier} explains the concepts, algorithms, and methodology of the proposed classification method. Next, Section~\ref{scc:results} describes algorithms, datasets, the experimental protocol, evaluation criteria, and the selected metrics to assess the proposed method performance. In Section~\ref{scc:discussion}, the results and implementation details of the proposed method are explained. Conclusions are presented in Section~\ref{scc:conclusions}.


\section{Mathematical foundations}\label{scc:methods}

In this section, mathematical definitions are introduced (i.e.: simplices, simplicial complex, the \v{C}ech and Rips complexes, the star, and link concepts). Concepts such as persistent homology, filtration, sub-complex, and filtration levels are briefly presented. For more detailed definitions, please see~\cite{EdelsbrunnerHarer2010}. 

\subsection{Simplicial Complexes} \label{scc:math}

Simplicial complexes are combinatorial and algebraic objects which represent a discrete space homotopically equivalent to a data space. Concepts related to simplicial complexes are defined briefly as follows: a q-simplex $ \sigma$ is the convex hull of $q + 1$ affinely independent points $\{s_0, s_1, \cdots, s_q\}\subset\mathbb{R}^n$, $q\leq n$. In this case, the set $\mathcal{V}(\sigma)=\{s_0, s_1, \cdots, s_q\}$ is called the set of \emph{vertices} of $\sigma$ and the simplex $\sigma$ is generated by the set $\mathcal{V}(\sigma)$; this relation will be denoted by $\sigma=[s_{0},s_{1},\dots,s_{q}]$. A q-simplex $\sigma$ has dimension $dim(\sigma)=q$ and it has $\abs{\mathcal{V}(\sigma)} = q + 1$ vertices. Given a q-simplex $\sigma$, a d-simplex $\tau$ with $0\leq d \leq q$ and $\mathcal{V}(\tau)\subseteq \mathcal{V}(\sigma)$ is called a \emph{d-face} of $\sigma$, denoted by $\tau \leq \sigma$,  and $\sigma$ is called a \emph{q-coface} of $\tau$, denoted by $\sigma \geq \tau$.  Note that the 0-faces of a q-simplex $\sigma$ are the elements of $\mathcal{V}(\sigma)$, the 1-faces are line segments with endpoints in $\mathcal{V}(\sigma)$ and so forth.  A q-simplex has $\binom{q+1}{d+1}$ d-faces and $\sum_{d=0}^{q}\binom{q+1}{d+1} = 2^{q+1}$ faces in total. \par 
In order to define homology groups of topological spaces, the notion of simplicial complexes is central:

\begin{definition} \textbf{(Simplicial complex):}
A simplicial complex $\mathcal{K}\subset\mathbb{R}^{n}$ is a finite collection of simplices such that:

\begin{itemize}
    \item $\sigma \in \mathcal{K} \mbox{ and } \tau \leq \sigma \implies \tau \in \mathcal{K} $.
    \item $\sigma_1, \sigma_2 \in \mathcal{K} \implies \sigma_1 \cap \sigma_2 \leq \sigma_1$ and $\sigma_1 \cap \sigma_2 \leq \sigma_2$.        
\end{itemize}
The dimension of $\mathcal{K}$ is $dim(\mathcal{K}) = max\{dim(\sigma) \mid \sigma \in \mathcal{K}\}$. \par
\end{definition}

There are many known simplicial complexes, though two of the most popular are the \v{C}ech~\cite{Ghrist2008, EdelsbrunnerHarer2010} and Vietoris-Rips complexes. In the following definitions the set $B_{\varepsilon}(x)\subset\mathbb{R}^{n}$ should denote the open ball of radius $\varepsilon$ and centered at $x$, namely $B_{\varepsilon}(x)=\{y\in\mathbb{R}^{n}\mid |x-y|<\varepsilon\}$.

\begin{definition}\textbf{(\v{C}ech complex):}\label{def:Cech}
Let $X$ be a finite subspace of $\mathbb{R}^{n}$ and fix $\varepsilon >0$. The \v{C}ech complex $\check{C}ech(\varepsilon)$ is a simplicial complex where the vertices or 0-simplices are the elements of $X$ and a set of vertices $\{v_{0},v_{1},\dots,v_{q}\}$ define a q-simplex if
$$\bigcap_{i=0}^{q}B_{\varepsilon}(v_{i})\neq\emptyset$$
\end{definition}

\begin{definition}\textbf{(Vietoris-Rips or $VR$ complex):}\label{def:VR}
Let $X$ be a finite metric space and fix $\varepsilon >0$. The Vietoris-Rips complex $VR(\varepsilon)$ is a simplicial complex where the 0-simplices are the elements of $X$ and a set of vertices $\{v_{0},v_{1},\dots,v_{q}\}$ define a q-simplex $\sigma=[v_{0},v_{1},\dots,v_{q}]$ if $diam(\sigma) \leq 2\varepsilon$.
\end{definition}

From the above definitions it follows that  $\check{C}ech(\varepsilon) \subseteq VR(\varepsilon) \subseteq \check{C}ech(2\varepsilon)$, where a proof is given in~\cite{EdelsbrunnerHarer2010}, and this relationship is shown in Figure~\ref{fig:cechvr}.
The \v{C}ech complex is intrinsically a high dimensional simplicial complex. From a computational sense, $VR$ complex is more feasible (i.e. lower storage and time complexity) than \v{C}ech, even when the $VR$ complex has more simplices in general. Compared to \v{C}ech, the VR complex does not need to be stored entirely, as it can be stored as a graph and be reconstituted combinatorially~\cite{Ghrist2008}. Even when the results in this paper could be applied to several simplicial complexes with minor changes, this document is focused on \v{C}ech and $VR$ complexes. 

\begin{figure}[H]
\centering
\includegraphics[width=0.5\textwidth]{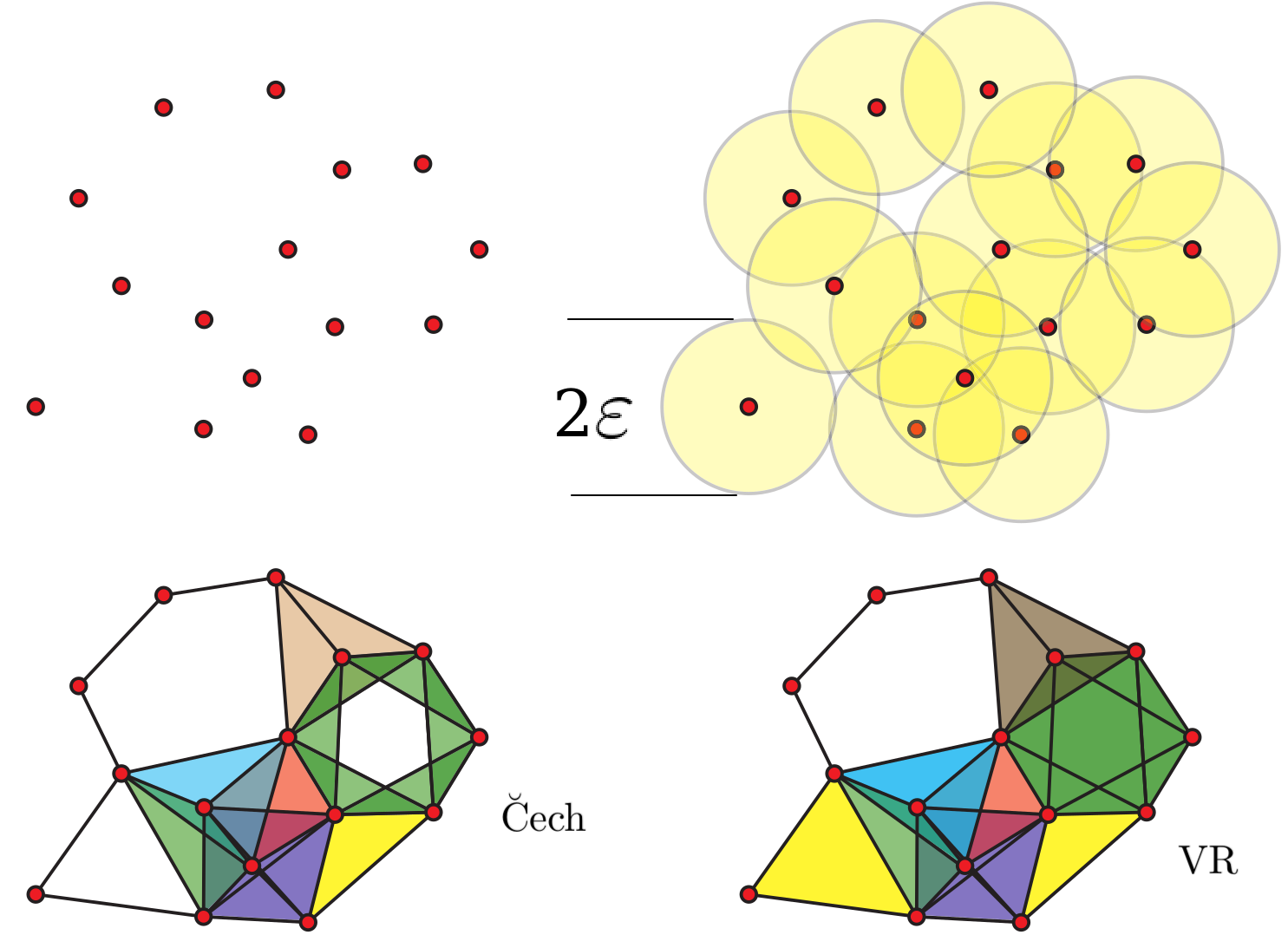}
\caption{
From a point set [upper left] a proximity parameter $\varepsilon$ is applied [upper right] and two complexes were built: a \v{C}ech complex [lower left] and a VR complex [lower right]. \textit{This picture was taken from~\cite{Ghrist2008}}.}\label{fig:cechvr}
\end{figure}
\begin{definition}\textbf{(Star, Closure, Closed Star, and Link):}\label{def:starlink}
   Let $\mathcal{K}$ be a simplicial complex, and $\sigma \in \mathcal{K}$ be a q-simplex. The $star (St)$ of $\sigma$ is the set of all co-faces of $\sigma$ in $\mathcal{K}$~\cite{EdelsbrunnerHarer2010}:
    $$St_\mathcal{K}(\sigma) = \{\tau \in \mathcal{K} \mid \sigma \leq \tau\}$$ 
   
   Let $K$ be a subset of simplices $K \subset \mathcal{K}$. The $closure$ of $K$ is the smallest simplicial complex containing $K$: 
   
   $$Cl_\mathcal{K}(K) = \{\mu \in \mathcal{K} \mid \mu \leq \sigma \text{ for some } \sigma \in K \}$$
   
   The $St_\mathcal{K}(\sigma)$ is not a simplicial complex because of all the missing faces. The smallest simplicial complex that contains $St_\mathcal{K}(\sigma)$ is the closed star (closure of star) of $\sigma$: 
   
   $$\overline{St}_\mathcal{K}(\sigma) = Cl_\mathcal{K}(St_\mathcal{K}(\sigma))$$
   
    The $link (Lk)$ of $\sigma$ is a set of simplices in its closed star that does not share any face with $\sigma$~\cite{EdelsbrunnerHarer2010}: 
    $$Lk_\mathcal{K}(\sigma) = \{\tau \in \overline{St}_\mathcal{K}(\sigma) \mid \tau \cap \sigma = \emptyset\}$$ 
        
\end{definition}

The concept of link of a simplex in a simplicial complex will be important along this paper. For this reason we present two equivalent characterizations of this set:

\begin{lemma}: \label{lemma:002} Let $\mathcal{K}$ be a simplicial complex and $\sigma\in \mathcal{K}$. Then $Lk_\mathcal{K}(\sigma)$ coincide with the sets
\begin{equation}
 A  = \overline{St}_\mathcal{K}(\sigma) \setminus (St_\mathcal{K}(\sigma) \cup Cl_\mathcal{K}(\sigma)), \text{ and}
 \end{equation}
\begin{equation}\label{eq:linklemma}
B=\bigcup_{\mu \in St_{\mathcal{K}}(\sigma)} \{[\mathcal{V}(\mu) \setminus \mathcal{V}(\sigma)]\}
\end{equation}
\end{lemma}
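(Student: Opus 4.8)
The plan is to prove both equalities $Lk_\mathcal{K}(\sigma)=A$ and $Lk_\mathcal{K}(\sigma)=B$ by reducing every membership condition to a statement about vertex sets and then arguing by double inclusion. First I would set up a small dictionary: for $\tau\in\mathcal{K}$ one has $\tau\in St_\mathcal{K}(\sigma)\iff\mathcal{V}(\sigma)\subseteq\mathcal{V}(\tau)$, $\tau\in Cl_\mathcal{K}(\sigma)\iff\mathcal{V}(\tau)\subseteq\mathcal{V}(\sigma)$, and $\tau\cap\sigma=\emptyset\iff\mathcal{V}(\tau)\cap\mathcal{V}(\sigma)=\emptyset$ (two simplices of a complex meet in a common face, so their geometric intersection is empty exactly when they share no vertex). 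For the closed star I would record the characterization $\tau\in\overline{St}_\mathcal{K}(\sigma)\iff[\mathcal{V}(\tau)\cup\mathcal{V}(\sigma)]\in\mathcal{K}$: if $\tau\leq\mu$ with $\sigma\leq\mu$ then $[\mathcal{V}(\tau)\cup\mathcal{V}(\sigma)]$ is a face of $\mu$, hence in $\mathcal{K}$, while conversely $[\mathcal{V}(\tau)\cup\mathcal{V}(\sigma)]$ is itself a coface of $\sigma$ of which $\tau$ is a face.

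For the equality $Lk_\mathcal{K}(\sigma)=B$ I would exhibit the join correspondence $\tau\leftrightarrow\mu=[\mathcal{V}(\tau)\cup\mathcal{V}(\sigma)]$. Given $\tau\in Lk_\mathcal{K}(\sigma)$, disjointness gives $\mathcal{V}(\mu)\setminus\mathcal{V}(\sigma)=\mathcal{V}(\tau)$, and $\mu\in St_\mathcal{K}(\sigma)$ by the closed-star characterization, so $\tau=[\mathcal{V}(\mu)\setminus\mathcal{V}(\sigma)]\in B$. Conversely, for any $\mu\in St_\mathcal{K}(\sigma)$ the simplex $[\mathcal{V}(\mu)\setminus\mathcal{V}(\sigma)]$ is disjoint from $\sigma$ by construction and is a face of $\mu$, hence lies in $\overline{St}_\mathcal{K}(\sigma)$; this yields $B\subseteq Lk_\mathcal{K}(\sigma)$. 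I expect this direction to be routine.

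For $Lk_\mathcal{K}(\sigma)=A$ the easy inclusion is $Lk_\mathcal{K}(\sigma)\subseteq A$: if $\tau$ is nonempty and disjoint from $\sigma$ then $\mathcal{V}(\sigma)\not\subseteq\mathcal{V}(\tau)$ and $\mathcal{V}(\tau)\not\subseteq\mathcal{V}(\sigma)$, so $\tau\notin St_\mathcal{K}(\sigma)$ and $\tau\notin Cl_\mathcal{K}(\sigma)$, while $\tau\in\overline{St}_\mathcal{K}(\sigma)$ by hypothesis. The reverse inclusion $A\subseteq Lk_\mathcal{K}(\sigma)$ is the crux and the step I expect to be the main obstacle. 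The difficulty is a simplex $\tau\in\overline{St}_\mathcal{K}(\sigma)$ that shares a strict, nonempty subset of the vertices of $\sigma$: such a $\tau$ is neither a coface of $\sigma$ (it omits some vertex of $\sigma$) nor a face of $\sigma$ (it has a vertex outside $\sigma$), so it survives into $A$, yet $\mathcal{V}(\tau)\cap\mathcal{V}(\sigma)\neq\emptyset$ means it is not in the link. Controlling these partially overlapping simplices is exactly where care is needed, and I would expect to need either an additional hypothesis on $\sigma$ (the case $\dim\sigma=0$, where partial overlap cannot occur, closes immediately) or a finer argument restricting attention to the generating cofaces of $\overline{St}_\mathcal{K}(\sigma)$ in order to finish this direction.
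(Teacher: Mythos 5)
Your two positive claims are correct: the join correspondence $\tau\leftrightarrow[\mathcal{V}(\tau)\cup\mathcal{V}(\sigma)]$ proves $Lk_\mathcal{K}(\sigma)=B$ cleanly (essentially the same argument the paper uses for $B\subset Lk_\mathcal{K}(\sigma)$), and $Lk_\mathcal{K}(\sigma)\subseteq A$ is immediate, as you say. More importantly, the obstacle you flag in the direction $A\subseteq Lk_\mathcal{K}(\sigma)$ is not a weakness of your approach but a genuine counterexample to the lemma as stated. Take $\mathcal{K}$ to be the full $2$-simplex on vertices $\{a,b,c\}$ and $\sigma=[a,b]$. Then $St_\mathcal{K}(\sigma)=\{[a,b],[a,b,c]\}$, $\overline{St}_\mathcal{K}(\sigma)=\mathcal{K}$, and $Cl_\mathcal{K}(\sigma)=\{[a],[b],[a,b]\}$, so $A=\{[c],[a,c],[b,c]\}$, whereas $Lk_\mathcal{K}(\sigma)=B=\{[c]\}$. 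The simplices $[a,c]$ and $[b,c]$ are exactly the ``partially overlapping'' simplices you describe, so no argument can close $A\subseteq Lk_\mathcal{K}(\sigma)$ for general $\sigma$; your instinct that an extra hypothesis such as $\dim\sigma=0$ is needed is right.

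The paper's own proof stumbles precisely here. In its step $A\subset B$ it chooses $\mu\in St_\mathcal{K}(\sigma)$ with $\tau\leq\mu$ and $\mathcal{V}(\mu)\setminus\mathcal{V}(\tau)\subset\mathcal{V}(\sigma)$ (the join), and then asserts $\tau=[\mathcal{V}(\mu)\setminus\mathcal{V}(\sigma)]$. That last equality holds only if $\mathcal{V}(\tau)\cap\mathcal{V}(\sigma)=\emptyset$, which is exactly what membership in $A$ fails to guarantee: deleting $\mathcal{V}(\sigma)$ from $\mathcal{V}(\mu)$ also deletes the vertices $\tau$ shares with $\sigma$. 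In the example above, for $\tau=[a,c]$ one is forced to take $\mu=[a,b,c]$, and $[\mathcal{V}(\mu)\setminus\mathcal{V}(\sigma)]=[c]\neq\tau$. The lemma, your argument, and the paper's argument are all valid in the case $\dim\sigma=0$, where partial overlap is impossible (a nonempty common face of $\tau$ and $[v]$ must be $[v]$ itself, forcing $\tau\in St_\mathcal{K}([v])$); fortunately this is the only case the paper ever uses, since the extension function (Definition~\ref{def:assoc_func_psi}) and the implementation invoke only $Lk_{\mathcal{K}_i}([v])$ for points $v\in X$. For general $\sigma$ the correct identity replacing $A$ is $Lk_\mathcal{K}(\sigma)=\overline{St}_\mathcal{K}(\sigma)\setminus\bigcup_{\nu\leq\sigma}St_\mathcal{K}(\nu)$: one must remove the stars of all faces of $\sigma$, not merely $St_\mathcal{K}(\sigma)\cup Cl_\mathcal{K}(\sigma)$.
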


\begin{proof}
Let $\tau$ be a simplex in $Lk_\mathcal{K}(\sigma)$. In particular, $\tau$ does not belong to $St_\mathcal{K}(\sigma)$ nor $ Cl_\mathcal{K}(\sigma)) $ since any simplex in one of these two sets necessarily intersects $\sigma$, then $Lk_\mathcal{K}(\sigma)\subset A$. \\
If $\tau$ is a simplex in $A$, then there exists $\mu\in St_\mathcal{K}(\sigma)$ such that $\tau\leq \mu$ and $(\mathcal{V}(\mu)\setminus\mathcal{V}(\tau))\subset\mathcal{V}(\sigma)$. It follows that 
$\tau=[\mathcal{V}(\mu)\setminus\mathcal{V}(\sigma)]$ and $A\subset B$.\\
Finally, if $\tau\in B$, then $\tau=[\mathcal{V}(\mu)\setminus\mathcal{V}(\sigma)]$ for some $\mu\in St_\mathcal{K}(\sigma)$. It follows that $\tau\in \overline{St}_\mathcal{K}(\sigma)$, but $\tau\cap \sigma =\emptyset$. Then, $B\subset Lk_\mathcal{K}(\sigma)$, and the equivalence of sets is stated.

\end{proof}

Figure~\ref{fig:linkstar} presents an example of $St$ and $Lk$ of point $s_4$ from a given simplicial complex $\mathcal{K}$ build on a point set $S$. 
    
\begin{figure}[H]
 \centering   
  \includegraphics[width=1\textwidth]{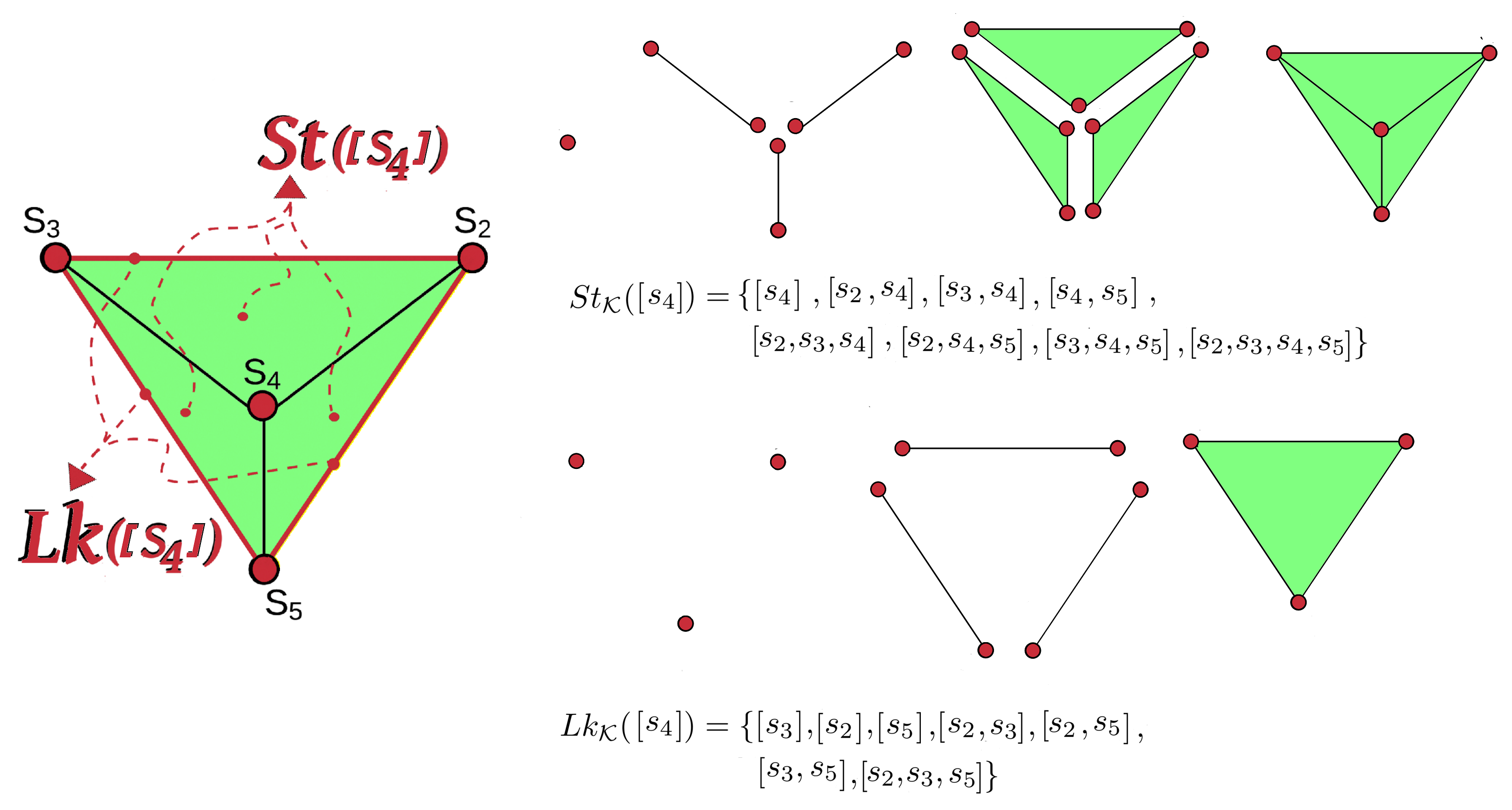}
 \caption{Example of $St_\mathcal{K}([S_4])$ and $Lk_\mathcal{K}([S_4])$ on a given simplicial complex $\mathcal{K}$.}
 \label{fig:linkstar}
\end{figure}

\subsection{Persistent Homology}

Persistent homology is a tool to find topological features in a metric space~\cite{EdelsbrunnerHarer2010, Carlsson:Bulletin}. As a general rule, the objective of persistent homology is to track how topological features on a topological space appear and disappear when a scale value (usually a radius) varies incrementally, in a process known as filtration~\cite{EDelsbrunnerMorozov2014, zomorodian_2005}.
\begin{definition}\textbf{(Sub-complex):}\label{def:subcomplex}
Let $\mathcal{K}$ be a simplicial complex. $\mathcal{K}'$ is a sub-complex of $\mathcal{K}$ if $\mathcal{K}' \subseteq \mathcal{K}$ and $\mathcal{K}'$ is also a simplicial complex.
\end{definition}
\begin{definition}\textbf{(Filtration):}\label{def:filtration} Let $\mathcal{K}$ be a simplicial complex.  A filtration $\mathcal{F}$ on $\mathcal{K}$ is a succession of increasing sub-complexes of $\mathcal{K}$:

 \[
 \emptyset \subseteq \mathcal{K}_0 \subseteq \mathcal{K}_1 \subseteq \mathcal{K}_2 \subseteq \mathcal{K}_3 \subseteq \cdots \subseteq \mathcal{K}_n = \mathcal{K}\]
 In this case, $\mathcal{K}$ is called a filtered simplicial complex.
 \end{definition}

In most of simplicial complexes where the simplices are determined by proximity under a distance function (as in the case of \v{C}ech or VR complexes), a filtration on a simplicial complex $\mathcal{K}$ is obtained by taking a sequence of positive values $0<\varepsilon_{0}\leq \varepsilon_{1}\leq\dots\leq\varepsilon_{n}$, and the complex $\mathcal{K}_{i}$ corresponds to the value $\varepsilon_{i}$.

\begin{definition}[Filtration level function, $\psi$]\label{def:get_level_function}
Let $\mathcal{K}$ be a finite simplicial complex and $\mathcal{F}$ a filtration on $\mathcal{K}$: $\emptyset \subseteq \mathcal{K}_0 \subseteq \mathcal{K}_1 \subseteq \mathcal{K}_2 \subseteq \mathcal{K}_3 \subseteq \cdots \subseteq \mathcal{K}_n = \mathcal{K}$. The filtration level function $\psi_ \mathcal{F}$ is defined on $\{\varepsilon_0,\varepsilon_1,\dots,\varepsilon_n\} $  by $\psi_\mathcal{F}(\varepsilon_i)= \mathcal{K}_i$.  
\end{definition}

 A filtration could be understood as a method to build the whole simplicial complex $\mathcal{K}$ from a ``family'' of sub-complexes incrementally sorted according to some criteria, where each level $i$ corresponds to the ``birth'' or ``death'' of a topological feature as described in Definition~\ref{def:birth}.  

 This process is illustrated in Figure~\ref{fig:filtracion}.
 
 \begin{figure}[H] 
    \centering
    \includegraphics[width=0.7\textwidth]{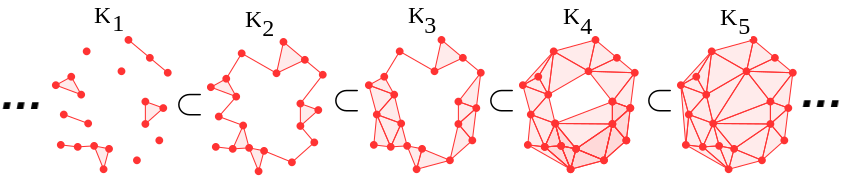}
    \caption{A fragment of a simplicial complex filtration.}\label{fig:filtracion}
\end{figure}

 \begin{definition} [Filtration value collection $\mathcal{E}_\mathcal{K}$]\label{def:fvalues} Let $\mathcal{K}$ be a filtered simplicial complex and $\mathcal{F}$ a filtration on $\mathcal{K}$. Let $\mathcal{E}_\mathcal{K} = \{\varepsilon_0, \varepsilon_1, \dots, \varepsilon_n\} \subset \mathbb{R}$ be a set of non-negative numbers such that $0\leq\varepsilon_0 < \varepsilon_1 < \cdots < \varepsilon_n$, where $\varepsilon_i \in \mathcal{E}_\mathcal{K}$ is a filtration value (radius) applied to build the sub-complex $\mathcal{K}_i \subseteq \mathcal{K}$ in the filtration $\mathcal{F}$. The set $\mathcal{E}_\mathcal{K}$ is called the filtration value collection associated to $\mathcal{F}$. 
 \end{definition}
 
 \begin{definition} [Filtration value of a q-simplex $\xi_\mathcal{K}(\sigma)$]\label{def:getfvalue} Let $\mathcal{K}$ be a filtered simplicial complex and $\mathcal{E}_\mathcal{K}$ its filtration value collection. Let $\sigma\in \mathcal{K}$ be a q-simplex. 
 If $\sigma \in \mathcal{K}_j$ but $\sigma \not \in \mathcal{K}_i, \forall i<j$, then $ \xi_\mathcal{K}(\sigma)=\varepsilon_{j}$ is the filtration value of $\sigma$. \par
 \end{definition}
 
 Note that $\tau \leq \sigma \implies \xi_\mathcal{K}(\tau) \leq \xi_\mathcal{K}(\sigma)$, which means that in a filtered simplicial complex $\mathcal{K}$, every simplex $\tau \in \mathcal{K}$ appears before all its co-faces $\sigma \in \mathcal{K}$.
 
 \begin{definition}[Birth and Death]\label{def:birth}
 \textit{Birth} is a concept to describe the filtration level when a new topological feature appears. Similarly, \textit{death} refers to the filtration level when a topological features disappears. Thus, a persistence interval (birth, death) is the ``lifetime'' of a given topological feature~\cite{EdelsbrunnerHarer2010}. 
  \end{definition}

\section{Proposed Classification Method}\label{scc:classifier}

Let $P \subset \mathbb{R}^{n}$ be a finite point set, where $\mathbb{R}^{n}$ is a feature space. Suppose $P$ is divided in two subspaces $P = S \cup X$, where $S$ is the training set, and $X$ is the test set. Let $L=\{l_{1},l_{2},\dots,l_{N}\}$ be the label set, and let $\mathbb{T} = \{(p, l): p \in P, l \in L\}$ be the association space, which relates every point $p \in P$ with a unique label $l \in L$. Let $T_S$ and $T_X$ be the two disjoint association sets corresponding to $S$ and $X$, respectively, where $\mathbb{T} = T_S \cup T_X$. In this setting, the real label list, $Y = \{l_i \mid (x_i, l_i) \in T_X\}$, is the list of labels assigned to each element of $X$ in the association set $T_X$. 
Thus, the classification problem can be defined as how to predict a suitable label $l \in L$ for every $x \in X$ by assuming the association set $T_X$ as unknown. Consequently, the predicted label list, $\hat{Y} \subset L^{\abs{T_X}}$, will be the resulting collection of labels after classifying each $x \in X$. Since $\abs{Y} = \abs{\hat{Y}}$, it is common to use $Y$ to evaluate the quality of $\hat{Y}$. Depending on the size of $X$ and $S$ the problem is known as supervised classification ($\abs{X} \leq \abs{S}$), or semi-supervised classification ($\abs{S} < \abs{X}$). \par
A classification method based on TDA is presented in this section. Overall, a filtered simplicial complex $\mathcal{K}$ is built over $S \cup X$ to generate data relationships.~Only a few of those relationships will be relevant relationships between data points. In this context, a relationship between points is real if this relationship is part of the data's hidden structure. Thus, persistent homology is applied to capture the real structure of the dataset. This information is helpful to detect a subset of relationships likely to be real. The proposed method is based on the supposition that on the filtration a sub-complex $\mathcal{K}_i\subset \mathcal{K}$ exists, whose simplices represent real data relationships. For every q-simplex $\sigma\in\mathcal{K}_i$, the set of vertices  of $\sigma$, $\mathcal{V}(\sigma)$, will be split into labeled and unlabeled points, where any of these subsets could be empty. The fact that a point set $\{v_0, v_1,\dots, v_q\} \subset S \cup X$ belongs to a q-simplex $\sigma \in \mathcal{K}$ implies a similarity or dissimilarity relationship between points $v_0, v_1,\dots, v_q$. This implicit relationship among data is applied to propagate labels from labeled points to unlabeled points. Thus, a link-based labeling propagation method is developed to make a suitable label prediction for each point $x \in X$. \par

\subsection{Link-based label propagation function}

On a filtered simplicial complex $\mathcal{K}$, the neighborhood relationships of a q-simplex $\sigma \in \mathcal{K}$ could be recovered by using the link, star, and closed star concepts (Definition~\ref{def:starlink}). A key component of the proposed method is the label propagation over a filtered simplicial complex. Given a simplicial complex $\mathcal{K}$, a separation between useful simplices, and not-useful simplices (see Definition~\ref{def:useful-simplex}) needs to be considered. This simplicial classification is helpful because useful-simplices contribute to discriminate more labeling information during the propagation and label assignment process. In this way, those sub-complexes $\mathcal{K}_i \in \mathcal{K}$ with an appropriate distribution of useful-simplices will be preferred. 

\begin{definition}\label{def:useful-simplex}
\textbf{(useful-simplex, and non-useful-simplex)}

    Let $\mathcal{K}$ be a simplicial complex built on $S \cup X$, and $\sigma \in \mathcal{K}$ be a q-simplex. We say $\sigma$ is a useful-simplex if it contains more elements from $S$ than elements from $X$. In another case, then $\sigma$ is a non-useful-simplex.  
\end{definition}

\subsubsection{The labeling function}~\label{sec:assoc_fun}

Let $\mathcal{K}$ be a finite simplicial complex built on $S\cup X$, and $\mathcal{F}$ be a filtration on $\mathcal{K}$: $\emptyset \subseteq \mathcal{K}_0 \subseteq \mathcal{K}_1 \subseteq \mathcal{K}_2  \subseteq \cdots \subseteq \mathcal{K}_n = \mathcal{K}$. Suppose a preferred subcomplex $\mathcal{K}_{i}$ in the filtration $\mathcal{F}$ has been selected. Let $A$ be the $\mathbb{R}$-module with generators $\hat{l}_1, \hat{l}_2,\dots, \hat{l}_N$. The generator $\hat{l}_j$ will be associated to the label $l_{j}$ according to the following definition:

\begin{definition}[Association function]\label{def:assoc_funct} Let  $\Phi_{i}: \mathcal{K}_{i}\rightarrow A$ be the association function defined on a 0-simplex $v\in \mathcal{K}_{i}$ as  $\Phi_{i}(v) =\hat{l}_{j}$ if $(v,l_{j})\in T_S$ and $\Phi_{i}(v)=0\in A$ in any other case. The association function can be extended to a q-simplex $\sigma$ as $\Phi_{i}(\sigma)= \sum_{v\in \mathcal{V}(\sigma)}\Phi_{i}(v)$.
\end{definition}

As an intermediate step to propagate labels from labeled points in $S$ to the unlabeled points in $X$ by means of the link operation in simplicial complexes, define the extension function  in $X$ as follows:

\begin{definition}[Extension function]\label{def:assoc_func_psi} Let $\Psi_{i}: X\rightarrow A$ be the function defined on a point $v\in X$ by
\begin{equation}\label{eq:extension}
 \Psi_{i}(v)=\sum_{\sigma\in Lk_{\mathcal{K}_{i}}([v])}\frac{1}{\xi_\mathcal{K}(\sigma)} \cdot \Phi_{i}(\sigma)
\end{equation}
\end{definition}

     In Equation~\ref{eq:extension}, for every q-simplex $\sigma \in Lk_{\mathcal{K}_{i}}([v])$, the filtration value $\xi_\mathcal{K}(\sigma)$ is applied to prioritize the influence of $\sigma$ to label $v$. Let $\alpha, \beta \in Lk_{\mathcal{K}_{i}}([v])$ be two simplices, such that $\xi_\mathcal{K}(\alpha) < \xi_\mathcal{K}(\beta)$. This condition implies that the vertices of $\alpha$ cluster around $v$ earlier than the vertices of $\beta$ do since they were added first to the filtration. In consequence, $\alpha$ contributions should be more important than $\beta$ contributions. \par

According to the previous definitions, given a point $v\in X$, the evaluation of the extension function at $v$ would be $\Psi_{i} (v)=\sum_{j=1}^{N} a_{j}\cdot \hat{l}_{j}$, where $a_{j}\in \mathbb{R}^{+}\cup\{0\}, j=1,\dots, N$.

 \begin{definition}[Labeling function]\label{def:lab_func}  Let $v$ be a point in $X$ such that  $\Psi_{i} (v)=\sum_{j=1}^{N} a_{j}\cdot \hat{l}_{j}$. If $\tilde{a}$ is the maximum value in $\{a_{j}\}_{j=1}^{N}$, define the labeling function $\Upsilon_i$ at $v$ as  $\Upsilon_i(v) =  l_k$ where $k$ is uniformly selected from the set $\{j\mid a_{j}=\tilde{a}\}$.
 
 \end{definition}

If there exists a unique maximum in the set $\{a_{j}\}_{j=1}^{N}$ from the previous definition, then the labeling function is uniquely defined at $v$. In most of datasets where the proposed TDA classification method was tested, the label assignment of each point in $X$ was uniquely defined. Figure~\ref{fig:epsilon} shows the labeling process on a previously selected sub-complex.

\begin{figure}[H]
    \centering
    \includegraphics[width=1\linewidth]{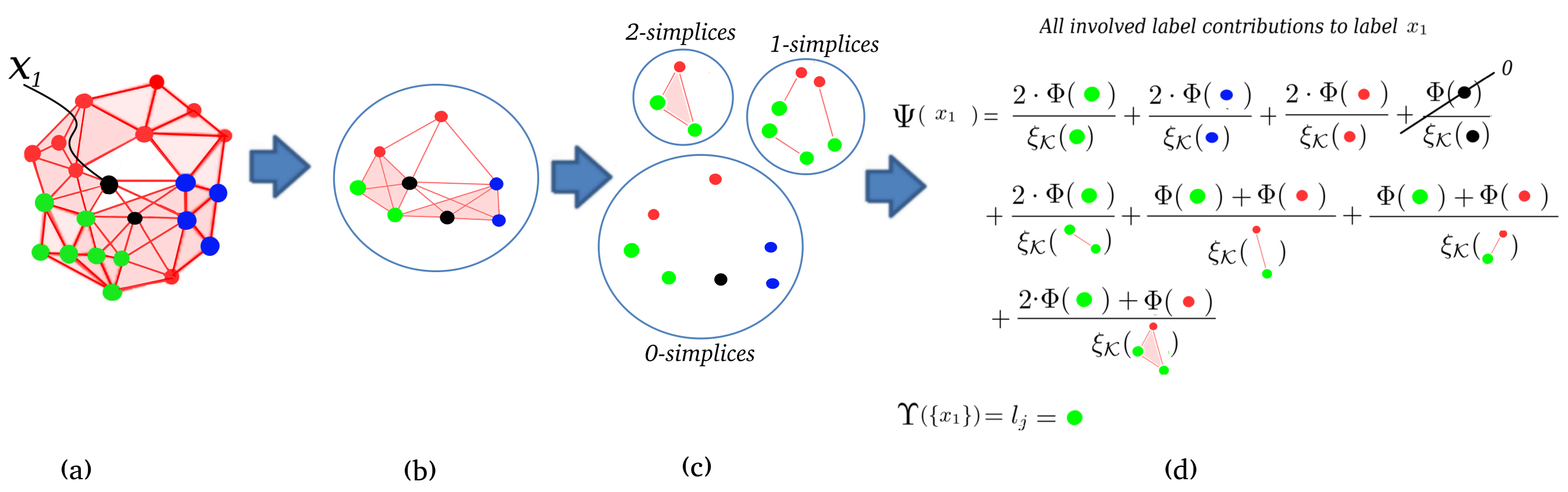}
    \caption{Example of execution of the labeling function on unlabeled points $x_1$. In \textbf{(a)}, a sub-complex $\mathcal{K}_i \subseteq \mathcal{K}$ is presented where the labeling will be executed. The neighborhood of $x_1$ is drawn in \textbf{(b)}. In \textbf{(c)},  $Lk([x_1])$ is shown divided into 0-simplices, 1-simplices, and 2-simplices. In \textbf{(d)}, the extension function $\Psi(x_1)$ is executed, and finally the labeling function $\Upsilon(x_1)$ assigns a green label to $x_1$. The $\xi_\mathcal{K}$ function disambiguates label contributions, which at the beginning seems to be a tie between labels.}\label{fig:epsilon}
\end{figure}
    
\subsection{Classification by using simplicial complexes and persistent homology}

The proposed method computes the predicted label list $\hat{Y}$ corresponding to $X$. The entire process is summarized in Algorithm~\ref{alg:TDABC}. Following subsections explain each step in detail. \par
\begin{algorithm}[H]
\begin{algorithmic}[1]
\caption{\textbf{TDABC:} TDA-Based Classification Algorithm}\label{alg:TDABC}

\REQUIRE A training set $S \neq \emptyset$. \newline A test set $X \neq \emptyset$ to be classified.\newline The incomplete association set $T'$.
\ENSURE A prediction list $\hat{Y}$ of $X$ by using $T'$.
\STATE A filtered simplicial complex $\mathcal{K}$ is constructed by using the Algorithm~\ref{alg:direct-vr}\label{step:filtration}.
\STATE Obtain the prediction list $\hat{Y} = (l_1, l_2, \cdots, l_{\abs{X}})$, where each $l_i \in \hat{Y}$ is the most reliable label corresponding to $x_i \in X$, with $1 \leq i \leq \abs{X}$ by using $\mathcal{K}$ (Algorithm~\ref{alg:labeling})\label{step:label}
\RETURN the prediction list $\hat{Y}$. 
\end{algorithmic}
\end{algorithm}

\subsubsection{Building the filtered simplicial complex.}\label{scc:construction}

Let $S$ and $X$ be two point sets, where $S$ is the labeled set, and $X$ is the unlabeled set. The filtered simplicial complex $\mathcal{K}$ is built on $P = S \cup X$. A maximal dimension $2 \leq q \ll \abs{P}$ is given to control the simplicial complex combinatorial growing. Algorithm~\ref{alg:direct-vr} illustrates this process. 


\begin{algorithm} [H]
\caption{Construction of a filtered simplicial complex $\mathcal{K}$}\label{alg:direct-vr}
\begin{algorithmic}[1]
\REQUIRE A non-empty training set $S$. A non-empty test set $X$. An $0 < \epsilon \leq 1$ increment value to change the filtration level. Let $n$ be the maximum desired level of the filtration on $\mathcal{K}$.  
\ENSURE a filtered simplicial complex $\mathcal{K}$.
\STATE Unify $S$ and $X$ by $P \leftarrow S \cup X$  
\STATE $i \leftarrow 0$ 
\STATE $\varepsilon_i \leftarrow 0$
\STATE $\mathcal{K} \leftarrow \{\}$
\WHILE {$i \leq n$} 
    \STATE Build a simplicial complex $\mathcal{K}_i$ from $P$ by using an $\varepsilon_i$ value, with $dim(\mathcal{K}_i) \leq q$, according to~\cite{EdelsbrunnerHarer2010, DBLP:journals/talg/BoissonnatS18, DBLP:journals/algorithmica/BoissonnatM14, DBLP:journals/algorithmica/BoissonnatST17}
    
    \IF {$\mathcal{K} \neq \mathcal{K}_i$}
        \STATE $\mathcal{K} \leftarrow \mathcal{K} \cup \mathcal{K}_i$    
        \STATE $i \leftarrow i + 1$
        \STATE $\varepsilon_i \leftarrow \varepsilon_{i-1} + \epsilon$
    \ELSE
        \STATE $i \leftarrow n$
    \ENDIF
\ENDWHILE
\RETURN $\mathcal{K}$
\end{algorithmic}
\end{algorithm}

\subsubsection{Obtaining the most reliable label}\label{scc:labeling}

Once a filtered simplicial complex $\mathcal{K}$ is obtained, all the elements of~$X$ need to be labeled. Because of the full connectivity on $\mathcal{K}$, it is highly likely that all the vertices are interconnected. Then $dim(\mathcal{K})=N-1$, where $N = \abs{P}$, and the number of simplices in $\mathcal{K}$ would be $2^{N}$. \par
Therefore, if functions $\Psi_{n}(x)$ and $\Upsilon_{n}(x)$ from Definition~\ref{def:assoc_func_psi} and \ref{def:lab_func} are computed using $\mathcal{K}_n=\mathcal{K}$, it could occur that a given point $x \in X $  has contributions of all possible labels, due to the high number of co-faces of the analyzed simplices. In this context, it is worth understanding which simplices of $\mathcal{K}$ containing some $x \in X$ are reliable or which are noise to perform the label propagation and labeling.
Then, the purpose is to choose a sub-complex $\mathcal{K}_i$ from the filtered simplicial complex $\mathcal{K}$, such that: i) $\mathcal{K}_i$ constitutes a good approximation to the real structure of the dataset, and ii) there are enough useful simplices in $\mathcal{K}_i$ to label each point $x \in X$. In this vein, persistent homology is used to guide the process of selecting $\mathcal{K}_i$, reduce the classification space, and guarantee the useful simplices inside the selected sub-complex. \par  
With persistence homology, multi-dimensional topological features are detected. For a filtered simplicial complex $\mathcal{K}$ of dimension $q$, persistent homology will compute up to $q$-dimensional homology groups. Each topological feature represented by an element in a  homology group of a given dimension will be represented by a persistence interval $(birth, death)\in\mathbb{R}^2$ (see Definition~\ref{def:birth}).\par

The collection of simplices that shapes one topological feature belongs to a homology class. However, it is no trivial to recover information about every simplex belonging to that homology class, except by the simplex generator of that topological feature. For example, a 0-simplex connects to another 0-simplex and creates a 1-simplex. The 1-simplex, which joins together two 1-simplices, creates a hole. A 2-simplex is attached to the other three 2-simplices and becomes a 3-simplex, leaving a void inside. Precisely, the moment when this happens is the birth of a topological feature and the death of another topological feature of a lower dimension. As it can be noticed, the relation between simplices and topological features is injective, and it is only related to the generator. Thus, only the generator of a topological feature might be recovered from a persistence interval at its birth.\par

Based on the assumption that any simplex is related to only one persistence interval, this relation will persist from its birth to its death. Therefore, all persistence intervals which have intercepted their lifetimes will have their corresponding related simplices coexisting during the interception time. Eventually, when a homology class dies, their connection with simplices also dies, and they are no longer considered, at least not directly. A conventional way to get a well-defined membership relation from simplices to a homology class is to look at the simplices associated with the birth of a homology class, which are the only simplices that can reliably be associated with the homology class. \par


The challenge is to find appropriate homology classes to ask for their associated simplices. It is known that long life invariants (high $death-birth$) represent topological features, while short life invariants are commonly considered noise. However, short life persistence intervals could also mean local topological features or high dimensional topological features, which could be more profitable in useful-simplices. On this scenario, persistence intervals will be considered from higher homology groups ($q-1$) downwards (see Algorithm~\ref{alg:piD}). \par
Persistent homology is then considered to recover topological features which represent meaningful data relationships. Some of those topological features hopefully will represent (in their birth) a level of filtration $\mathcal{K}_i$ that maximizes the number of useful-simplices associated with each $x \in X$. As a result, it is highly likely that we obtain a reliable label  for every $ x \in X$. \par
Persistent homology is computed according to~\cite{EdelsbrunnerHarer2010, EDelsbrunnerMorozov2014,  DBLP:journals/dcg/SilvaMV11,Dey2014ComputingTP}, and a collection $\mathbb{D} = \{D^0, D^1, \cdots, D^q\}$ is obtained with $D^i$ the persistence interval set of the $i^{th}$-dimensional homology group. Algorithm~\ref{alg:piD} computes a persistence interval set $D \in \mathbb{D}$, which is the non-empty homology group with higher dimension. 

\begin{algorithm}[H]

\begin{algorithmic}[1]
\caption{\textbf{GetPersistenceIntervalSet:} Computing the persistence interval set $D$}\label{alg:piD}
\REQUIRE A filtered simplicial complex $\mathcal{K}$.
\ENSURE A persistence interval set $D$, where:\newline
$D \leftarrow \{d_i \mid d_i=(birth, death)\}$.
\STATE $\mathbb{D} \leftarrow Compute Persistent Homology(\mathcal{K})$~\cite{DBLP:journals/dcg/SilvaMV11, EDelsbrunnerMorozov2014} with $\mathbb{D} = \{D^0, D^1, \cdots, D^q\}, D^i$ the persistence interval set of the $i^{th}$-dimensional homology groups. 
\STATE $D \leftarrow \{\}$
\STATE $i \leftarrow q$
\WHILE{$D == \emptyset$ and $i \geq 1$}
\STATE $D \leftarrow D^i$ with $D^i \in \mathbb{D}$
\STATE $i \leftarrow i-1$
\ENDWHILE
\RETURN $D$
\end{algorithmic}
\end{algorithm}

Let $d \in D$ be a persistence interval. Then $life(d) = d[death]-d[birth]$. We notice that $life(d)$ becomes undefined for immortal topological feature (i.e. infinite death time). To overcome this issue, it is enough to change the death time from infinite to the maximum of the filtration value collection $\mathcal{E}_\mathcal{K}$. We call this a $\vartheta$-transformation (see Equation~\ref{eq:delta}). Thus, a new function $int(d) = life(\vartheta(d))$ is defined to apply the $\vartheta$-transformation before life(d) is called. 

\begin{equation}\label{eq:delta}
    \vartheta (d) = \left\{\begin{matrix}
d & \mbox{ iff } & d[death] \neq \infty,\\ 
(d[birth], \max (\mathcal{E}_\mathcal{K})) & \mbox{ iff } & d[death] = \infty.
\end{matrix}\right.
\end{equation}

A desired persistence interval $d \in D$ is selected by using the (naive) functions defined on Equation~\ref{eq:maxint}, Equation~\ref{eq:randint}, and Equation~\ref{eq:avgint}: \par
\begin{enumerate}[(a)]
    \item The maximum persistence interval: 
    \begin{equation}\label{eq:maxint}
      d_m = MaxInt(D) = \argmax_{d \in D}(int(d)).
    \end{equation}
    \item A persistence interval selected in a random way: 
    \begin{equation}\label{eq:randint}
      d_r = RandInt(D) = \vartheta(random(D)).
    \end{equation}
    \item The closest interval to the persistence intervals average: 
    \begin{equation}\label{eq:avgint}
      d_a = AvgInt(D) = \argmin_{d \in D} \left | int(d) - avg(D) \right |,
    \end{equation}
    where $avg(D) = \frac{1}{\abs{D}} \cdot \sum_{d_i \in D} int(d_i).$
\end{enumerate}

Although homology groups are different, the birth and death of persistence intervals values are absolutes. Even when those persistence intervals that contain high dimensional invariants were selected, low dimensional topological features are not necessarily excluded. In Algorithm~\ref{alg:labeling}, a persistence interval $d$ is selected from a filtration, to recover a sub-complex $\mathcal{K}_{i} \subset \mathcal{K}$ and classify all $x \in X$. \par
Because of the injectivity between simplices and birth time of persistence intervals, the sub-complex $\mathcal{K}_{d[birth]} \subset \mathcal{K}$ might be selected. Nevertheless, the sub-complexes $\mathcal{K}_{\frac{d[birth]+d[death]}{2}} \subset \mathcal{K}$ and $\mathcal{K}_{d[death]} \subset \mathcal{K}$ could be selected as well. In these cases, the middle time ($\frac{d[birth]+d[death]}{2}$) and death time of persistence interval $d$ will capture all those simplices which are generators of topological features still alive (or born) on the middle and death times. The choice between birth, middle, or death time to select the most appropriate sub-complex seems related to the homology group dimension. When the selected homology group has a high dimension, $\mathcal{K}_{d[birth]}$ gives good precision on classification. On the other hand, if a 0-dimensional homology group is selected, the sub-complex $\mathcal{K}_{d[death]}$  should be the best choice. The sub-complex $\mathcal{K}_{\frac{d[birth]+d[death]}{2}}$ could be also selected on 1-dimensional homology groups. As a result, the birth time was chosen (see Algorithm~\ref{alg:labeling}) to select the sub-complex because it is always guaranteed to be present.\par

Figure~\ref{fig:circlepinfo} shows the selection of a sub-complex $\mathcal{K}_i \subseteq \mathcal{K}$,  where $\mathcal{K}$ is a filtered simplicial complex built on the Circles dataset (with noise = 10), one of the artificial datasets used to evaluate the proposed method in Section~\ref{scc:results}. The selection of $\mathcal{K}_i$ is guided by the persistent homology information and the application of $MaxInt(\cdot)$, and $RandInt(\cdot)$ selection functions (see Equation~\ref{eq:maxint}, and Equation~\ref{eq:randint}) to select an appropriate persistence interval according to each criterion.  Note that $RandInt(\cdot)$ was coincident with $AvgInt(\cdot)$, and the results of only one persistence interval were shown in Figure~\ref{fig:circlepinfo}. 

\begin{figure}[H]
\begin{center}
\includegraphics[width=1\textwidth]{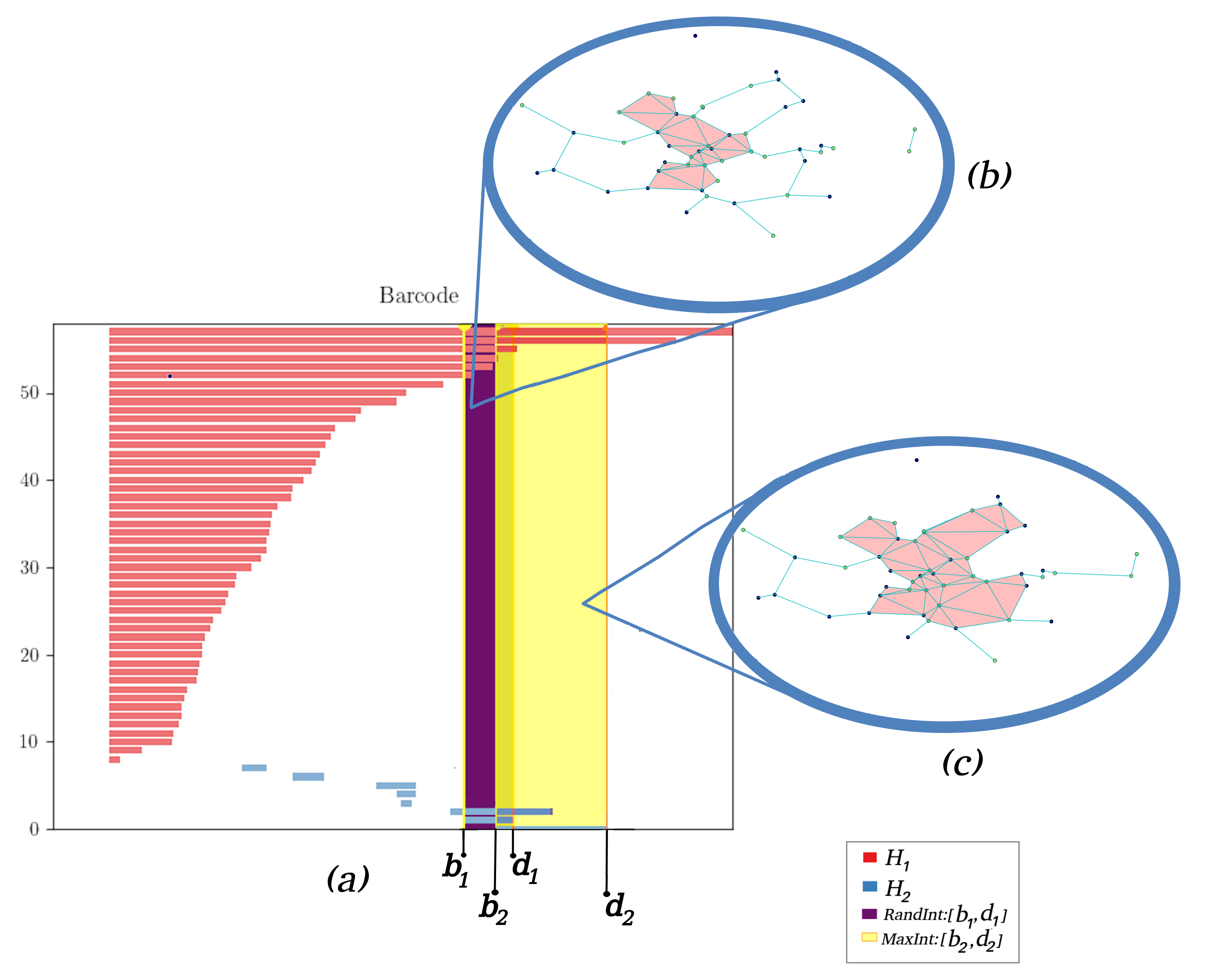}
\caption{A filtered simplicial complex $\mathcal{K}$ was built on the Circles dataset, detailed in Section~\ref{scc:artificial}.~Persistent homology was computed to guide the sub-complex selection. A barcode representation of persistent homology is shown in (a), with two selected persistence intervals $[b_1, d_1]$ (purple), and $[b_2, d_2]$ (yellow) corresponding to the $RandInt(\cdot)$; and $MaxInt(\cdot)$ selection functions respectively. In (b), the sub-complex $\mathcal{K}_{d_1} \subseteq \mathcal{K}$, and $\mathcal{K}_{d_2} \subseteq \mathcal{K}$ is shown in (c).}\label{fig:circlepinfo}    
\end{center}
\end{figure}

\begin{algorithm}[H]
\begin{algorithmic}[1]
\caption{\textbf{Labeling:} Labeling a test set $X$.}\label{alg:labeling}
\REQUIRE A filtered simplicial complex $\mathcal{K}$ by a filtration $\mathcal{F}$. \newline A non-empty test set $X$. 
\ENSURE A predicted labels list $\hat{Y}$ of $X$.
\STATE $D \leftarrow GetPersistenceIntervalSet(\mathcal{K})$ where:\newline
$D = \{d_i \mid d_i=(birth, death)\}$
\STATE Get a desired persistence interval $d$ where: \newline $d \in \{MaxInt(D), RandInt(D), AvgInt(D)\}$\label{step: selection} 
\STATE $\varepsilon_i \leftarrow d[birth]$ 
\STATE $\mathcal{K}_{i} \leftarrow \psi_\mathcal{F}(\varepsilon_i)$ see Definition~\ref{def:get_level_function}\label{step:extrange} 
\STATE $\hat{Y} \leftarrow \{\}$
\WHILE{$X \neq \emptyset$}
\STATE $x \in X$
\STATE $l \leftarrow \Upsilon_{{i}}(x)$ see Definition~\ref{def:assoc_func_psi}
\STATE $\hat{Y} \leftarrow \hat{Y} \cup \{l\}$
\STATE $X \leftarrow X \setminus \{x\}$
\ENDWHILE
\RETURN $\hat{Y}$
\end{algorithmic}
\end{algorithm}

An overview of the proposed method is presented in Figure~\ref{fig:overall}. To classify two black points $x_1, x_2 \in X$, a 4-steps process is executed. 

\begin{figure}[H]
    \centering
    \includegraphics[width=1\textwidth]{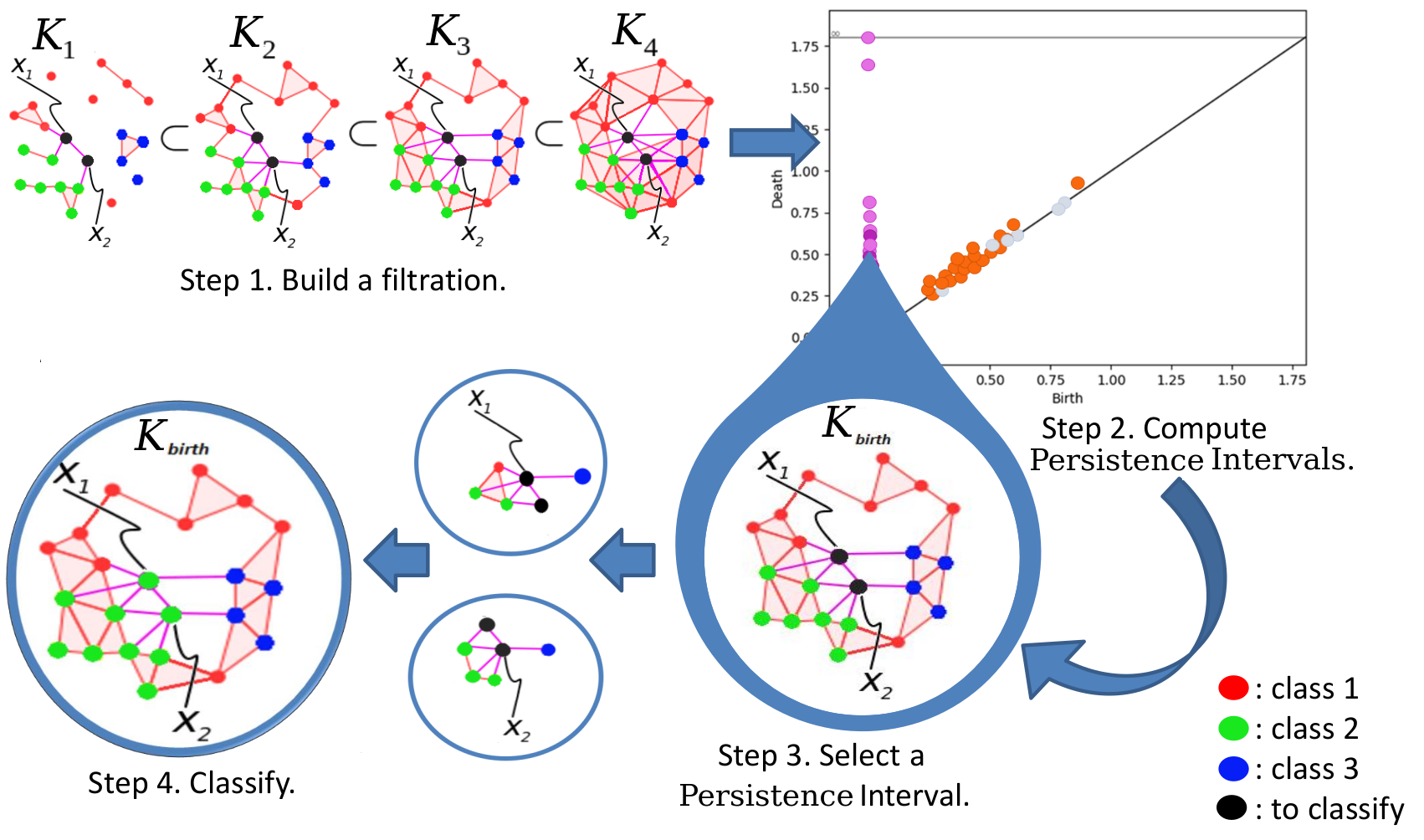}
    \caption{Overall TDABC algorithm (see Algorithm~\ref{alg:TDABC}). The first step is to build a filtered simplicial complex $\mathcal{K}$, following Algorithm~\ref{alg:direct-vr}. On the second step, persistent homology is computed to recover topological features according to~\cite{DBLP:journals/dcg/SilvaMV11} and \cite{Dey2014ComputingTP}. The third step consists of applying Algorithm~\ref{alg:piD} to compute a persistence interval by using Equation~\ref{eq:maxint}, \ref{eq:randint}, and \ref{eq:avgint}. A sub-complex $\mathcal{K}_i$ is then recovered from the filtration by using the birth of the selected interval. Fourth and last, the classification step uses a label propagation approach and takes the most voted label (Algorithm~\ref{alg:labeling} and Definition~\ref{def:lab_func}).} \label{fig:overall}
\end{figure}

\subsection{Implementation}

The proposed TDA-based classifier (TDABC) was implemented on top of the GUDHI library~\cite{gudhi2014, DBLP:journals/algorithmica/BoissonnatST17, DBLP:journals/talg/BoissonnatS18}, which is one of the most complete libraries for building simplicial complexes~\cite{gudhi:FilteredComplexes} and computing homology groups~\cite{DBLP:journals/algorithmica/BoissonnatST17, Otter2017, DBLP:journals/talg/BoissonnatS18, DBLP:journals/algorithmica/BoissonnatM14}. 

\subsubsection{Simplicial complex construction and persistence computation}\label{sec:impl_sc}

 The first step of the proposed TDABC method (see Figure~\ref{fig:overall}) is building a filtered simplicial complex $\mathcal{K}$ on $S \cup X$. For datasets with high dimensions or datasets with too many samples,  the implementation of the Algorithm~\ref{alg:direct-vr} in GUDHI could be impractical due to combinatorial complexity. Consequently, the combinatorial complexity of the simplicial complex must be reduced. To address this problem, the approach followed in this paper is using the edge collapse~\cite{gudhiContraction} method on the GUDHI library. The collapse of edges in GUDHI has to be performed on the 1-skeleton of the simplicial complex and then expand to build all high dimensional simplices up to a maximal dimension $q \ll \abs{S \cup X}$. Algorithm~\ref{alg:gudhidirect} computes a simplex tree by using the edge collapsing method. A collapsing coefficient depends on the maximal dimension $q$, but it could be enhanced by invoking $collapse\_edges$ repeatedly until the simplex tree does not change any more. 

\begin{algorithm}[H]
\begin{algorithmic}[1]
\caption{Build a simplex tree with edge collapsing}\label{alg:gudhidirect}
\REQUIRE A non-empty point set $P = S \cup X$. 
\ENSURE An updated simplex tree $\mathcal{S}$.
\STATE $\mathcal{K} \leftarrow buildSC(P)$, create a simplicial complex on $P$ up to dimension $1$.
\STATE $\mathcal{S} \leftarrow create\_simplex\_tree(\mathcal{K}, max\_dim = 1)$
\STATE $c \leftarrow \left \lceil \frac{q}{3} \right \rceil$ \COMMENT{ a simple coefficient of edge collapsing}
\STATE $\mathcal{S} \leftarrow collapse\_edges(\mathcal{S}, c)$
\STATE $\mathcal{S} \leftarrow expansion(\mathcal{S}, max\_dim=q)$
\RETURN $\mathcal{S}$
\end{algorithmic}
\end{algorithm}

\subsubsection{Persistence computation and persistence interval selection}\label{sec:impl_ph}

In GUDHI, instead of persistent homology, persistent cohomology is computed using the algorithm \cite{DBLP:journals/dcg/SilvaMV11} and \cite{Dey2014ComputingTP} and the Compressed Annotation Matrix data structure implementation presented in \cite{annomatrix}. Due to homology and cohomology's duality both methods compute the same homological information, but cohomology provides richer topological information~\cite{DBLP:journals/dcg/SilvaMV11}. Algorithm~\ref{alg:piD}'s implementation in GUDHI is direct, using the $persistence(\cdot)$ method of the simplex tree data structure.  

\subsubsection{Label propagation implementation}\label{sec:impl_label_propagation}

The extension function $\Psi$ from Definition~\ref{def:assoc_func_psi} depends on the $Lk_{\mathcal{K}}$ operation from Definition~\ref{def:starlink}. Up to now the python interface of GUDHI Library (v.3.3.0)~\cite{gudhi:cython} does not have an implementation of the simplex link operation; however, it provides implementations of star and co-face operators. As a result, a link operation function was implemented based on Equation~\ref{eq:linklemma} from Lemma~\ref{lemma:002}. 

According to Lemma~\ref{lemma:002}, $\Psi$ function from Definition~\ref{def:assoc_func_psi} could be implemented based on $St_\mathcal{K}$. In addition, two ways of removing the $\sigma$ contributions are possible: a strict way and a belated one. The first method is computing strictly $Lk_\mathcal{K}$ using Lemma~\ref{lemma:002}. The advantage of this way is reducing the quantity of invocations of the association function $\Phi(\sigma)$ in Definition~\ref{def:assoc_funct}. In the second way, the $St_\mathcal{K}(\sigma)$ function is used as a whole, and the $\sigma$ contributions are then ignored during function $\Phi(\sigma)$ execution because it would be 0 for unknown 0-simplices. Lemma~\ref{lemma:002} and Definition~\ref{def:assoc_func_psi} show that both approaches are equivalent. \par

In GUDHI, each q-simplex $\sigma \in \mathcal{K}$ represented by a simplex tree $\mathcal{S}$ is related to its filtration value $\xi_\mathcal{K}(\alpha)$. Thus, the $star(\mathcal{S}, \sigma)$ is a function in $\mathcal{S}$ which returns a 2-tuple set $\{(\mu, \xi_\mathcal{K}(\mu)) \mid \mu \in St_\mathcal{K}(\sigma) \}$. This facilitates the implementation of function $\Psi$ and the recovery of the $\varepsilon$-values to impose a priority over simplices and minimize a tie.\par

\section{Results}\label{scc:results}

The proposed TDA-based classifier (TDABC) is sensible to the chosen selection function $RandInt(\cdot), MaxInt(\cdot), \mbox{ and } AvgInt(\cdot)$. Those selection functions can detect a specific sub-complex in the filtered simplicial complex built on the dataset. Due to this dependency, the proposed method's behavior needs to explored by using those functions. Consequently, three versions of TDABC methods are configured to assess the proposed solutions: 

\begin{enumerate}[(i)]
    \item  The TDABC-R classification method using the $RandInt(\cdot)$ selection function.
    \item The TDABC-M, because of the utilization of the $MaxInt(\cdot)$ selection function.
    \item The TDABC-A, which uses the $AvgInt(\cdot)$ selection function.
\end{enumerate}   

\subsection{Selected baseline classifiers }

Three baseline methods were selected to compare the proposed methods:

\begin{enumerate}[(i)]
    \item  The k-Nearest Neighbors (k-NN) implementation from Scikit-Learn~\cite{scikit-learn} was chosen.
    \item The distance-based weighted k-NN from Scikit-Learn was also selected to assess the proposed methods.
\end{enumerate} 

\subsection{Datasets}
 
Several data sets were chosen to evaluate the proposed methods and compare them to the baseline classifiers. Table~\ref{tb:datasets} shows the datasets with some of their characteristics.

\begin{table}[h!]
\centering
\resizebox{\textwidth}{!}{%
\begin{tabular}{|l|l|l|l|l|l|l|l|}
\hline
\textbf{Name} & \textbf{Dimensions} & \textbf{Classes} & \textbf{Size} & \textbf{\begin{tabular}[c]{@{}l@{}}Samples\\ per class\end{tabular}} & \textbf{Noise} & \textbf{Mean} & \textbf{Stdev} \\ \hline
\textbf{Circles} & 2 & 2 & 50 & {[}25,25{]} & 3 & - & - \\ \hline
\textbf{Moon} & 2 & 2 & 200 & {[}100,100{]} & 10 & - & - \\ \hline
\textbf{Swissroll} & 3 & 6 & 300 & {[}50,50,50,50,50,50,{]} & 10 & - & - \\ \hline
\textbf{Normdist} & 350 & 5 & 300 & {[}60,10,50,100,80{]} & - & {[}0,0.3,0.18,0.67,0{]} & 0.486 \\ \hline
\textbf{Sphere} & 3 & 5 & 653 & {[}500,100,25,16,12{]} & - & 0.3 & 0147 \\ \hline
\textbf{Iris} & 4 & 3 & 150 & {[}50,50,50{]} & - & - & - \\ \hline
\textbf{Wine} & 13 & 3 & 178 & {[}59, 71, 48{]} & - & - & - \\ \hline
\textbf{\begin{tabular}[c]{@{}l@{}}Breast \\ Cancer\end{tabular}} & 30 & 2 & 569 & {[}212, 357{]} & - & - & - \\ \hline
\end{tabular}%
}
\caption{Selected datasets to evaluate proposed and baseline classifiers.}
\label{tb:datasets}
\end{table}

 Each dataset will be explained in ths section, and a graphical representation is presented in Figure~\ref{fig:artificial_data} and Figure~\ref{fig:real_data}. There are datasets with more than 3 dimensions. In case datasets involves more than 3 dimensions, a Principal Component Analysis (PCA) was applied to reduce the dimensionality for visualization purposes only. Then, resulting datasets were plotted taking pairwise variables $\binom{3}{2}$ to provide several two-dimensional points of view with the axis XY, XZ, and YZ, respectively. \par

\subsubsection{Artificial datasets}\label{scc:artificial}

A group of datasets was artificially generated:  The Circles, Swissroll, Moon, Normdist, and Sphere datasets (see Figure~\ref{fig:artificial_data}). In this section, details regarding each one of those datasets will be provided.\par
The \textbf{\textit{Circles dataset}} is an artificial and simple dataset that consists of a large circle with a small circle inside. Both circles are Gaussian data with a spherical decision boundary for binary classification. A Gaussian noise factor of 3 was added to the data making the circular boundary more diffused. This dataset was proposed to assess the ability to disentangle or to deal with overlapped data regions. The label set will be $L=\{0, 1\}$ denoting both circles. The point set $P \in \mathbb{R}^2$ will be all samples points from both circles. Figure~\ref{fig:noiseless} shows the Circle dataset without noise. Figure~\ref{fig:artificial_data} presents this dataset with a noise factor of 3. The noisy Circles dataset was selected for experiments.   \par

The \textbf{\textit{Moon dataset}} is a simple dataset generated by making two interleaving half circles. A noise factor of 3 was added to data to make it difficult to separate both half circles. The label set $L = \{0, 1\}$ denoting both classes. The point set $P \in \mathbb{R}^2$ is composed of all generated samples of the dataset. Figure~\ref{fig:noiseless} shows the Moon dataset samples distribution without noise. Figure~\ref{fig:artificial_data} shows this dataset with used noise.\par

The \textbf{\textit{Swissroll dataset}} is an $\mathbb{R}^2$ point set mapped to $\mathbb{R}^3$ with a rolled shape. In this paper, a Swissroll dataset was generated using 300 samples from 5 different classes. Besides, a noise factor of 10 was added to the data, which dissolves the rolled shape almost totally. The label set $L = \{0, 1, 2, 3, 4\}$ will be composed by enumerating all classes. The generated samples will be directly used to build the point set $P \in \mathbb{R}^3$. Figure~\ref{fig:noiseless} shows the Swissroll dataset without noise, and Figure~\ref{fig:artificial_data} shows it with noise.\par

\begin{figure}[H]
\centering
\includegraphics[width=0.8\textwidth]{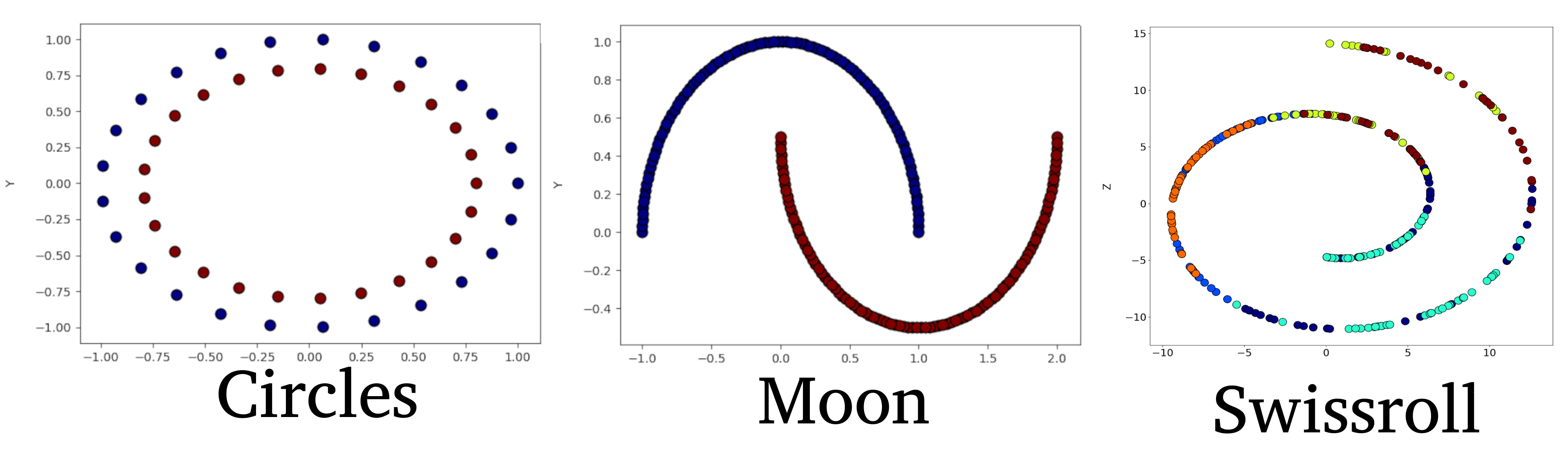}
\caption{The Circles, Moon, and Swissroll artificial datasets without noise. }\label{fig:noiseless}
\end{figure}

An Artificial Dataset Generator framework was implemented for developing datasets distributions. This framework is flexible enough to simulate several complex situations. It is possible to define the desired number of objects (classes) with samples number per class, and global mean and standard deviation or per class. \par
The \textbf{\textit{Normal Distribution based Dataset}} is generated by defining a several per class and overall parameters such as: dataset size, samples dimension, mean per class, standard deviation per class, number of samples per class, number of objects. The number of objects determine the number of classes or labels to be part of the Dataset. The dimension of the Dataset is solved by generating a normal distribution in each component.\par 
An artificial dataset based on mixtures of normal distribution ($Norm\_dist$) was generated using the dataset generation framework. This dataset is a high dimensional point set $P \subset \mathbb{R}^{350}$, with a total size of 300 samples. The label set will be $L = \{0, 1, 2, 3, 4\}$. The point set $P$ is composed by generating a normal distribution across each component. The sample number list is $[60, 10, 50, 100, 80]$. To guide the dispersion and density of the point cloud, we used a mean values collection {\it mean} = $[0, 0.3, 0.18, 0.67, 0]$ and a {\it standard deviation} ($0.486$) per label. Figure~\ref{fig:artificial_data} shows the samples' distribution after the PCA process was applied for visualization.\par

Generating a \textbf{\textit{Sphere-based dataset}} is similar to generating a normal distribution-based one. Although these datasets are always in three dimensions, they are oriented to capture problems associated with data shape, and entanglement between different class samples and diverse class sample distributions and sizes. Figure~\ref{fig:artificial_data} shows a sphere-based dataset $P \subset \mathbb{R}^3$. This data set has a total size of 653 samples, with a label set $L = \{0, 1, 2, 3, 4\}$. The label distribution is also imbalanced with $[500, 100, 25, 16, 12]$. The {\it mean}~($0.397$) and the {\it standard deviation}~($0.147$) are equal per label samples subset.  

\begin{figure}[H]
\centering
\includegraphics[width=0.8\textwidth]{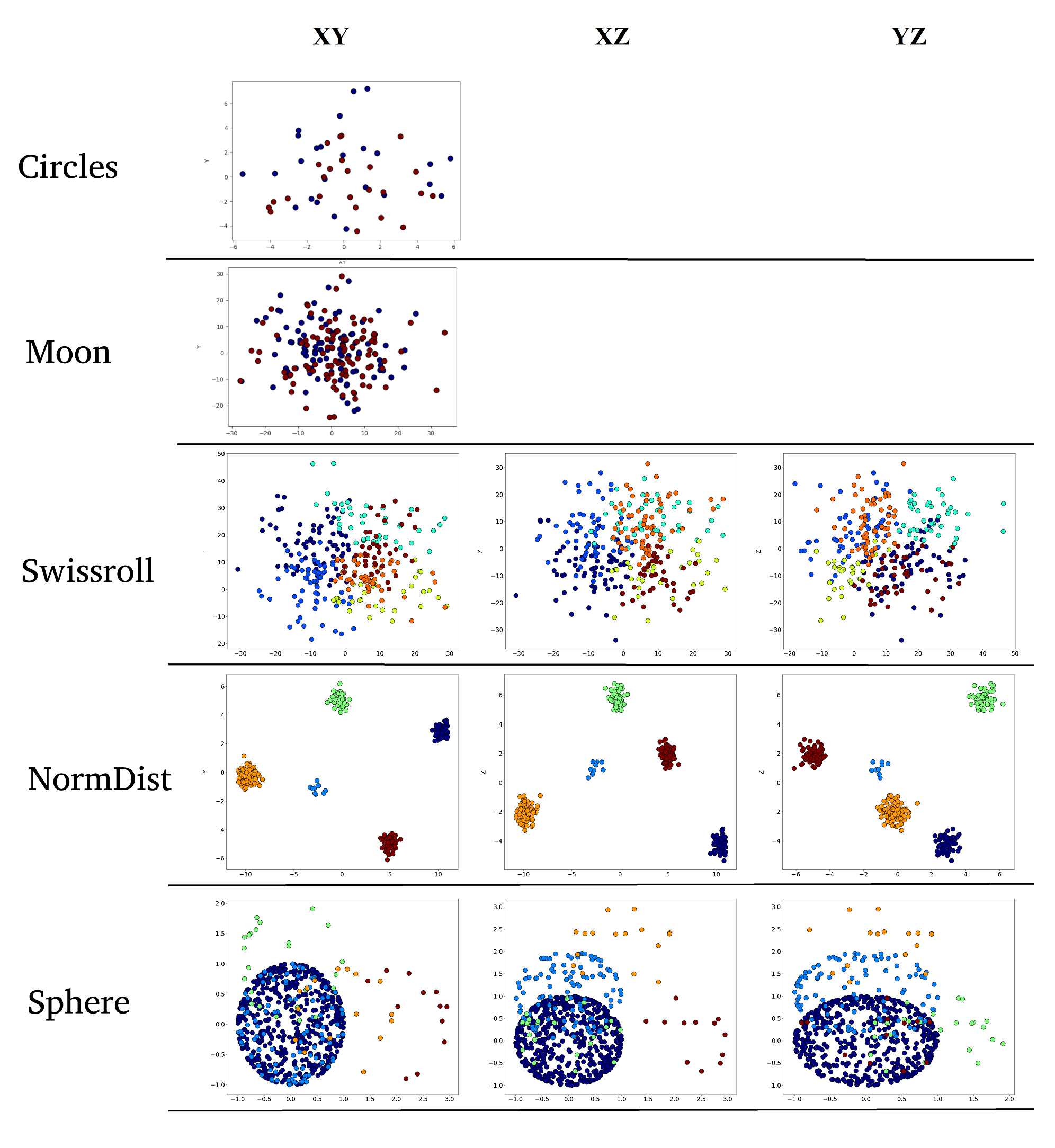}
\caption{The Circles, Moon, Swissroll, Normdist, and Sphere artificial datasets. }\label{fig:artificial_data}
\end{figure}

\subsubsection{Real datasets}
The Iris, Wine, and Breast Cancer datasets were selected as real datasets to compare the proposed classifiers and the baseline ones. In this section, each real dataset will be explained and several of their characteristics will be described.\par
The \textbf{\textit{Iris dataset}}~\cite{Dua:2019} contains 3 classes of 50 instances each, where each class refers to a type of Iris plant. One class is linearly separable from the other two, which are not linearly separable from each other (see Figure~\ref{fig:real_data}). The label set in Iris dataset is $L = \{0, 1, 2\}$ and their corresponding names are ``Setosa", ``Versicolor", and ``Virginica", respectively. Each sample in the Iris dataset is a 5-tuple, defined by (sepal\_length, sepal\_width, petal\_length, petal\_width, label).  
The class of Iris plant will be the predicted attribute. The point set $P$ is built using the first four components of each sample. \par

The \textbf{\textit{Wine dataset}}~\cite{Duadua:2019} is the result of a chemical analysis of wines grown in the same region in Italy by three different growers. There are thirteen different measurements taken for different components found in the three types of wine. The label set $L = \{0, 1, 2\}$ to enumerate the three wine types will be taken from the first component of each sample.~The point set $P \subset \mathbb{R}^{13}$ will be completed using the remaining 13 components of each sample. Figure~\ref{fig:real_data} shows the Wine dataset samples distribution after applying PCA to reduce dimensions to 3, and it was plotted combining two dimensions.\par

The \textbf{\textit{Breast Cancer dataset}}~\cite{Duadua:2019} features were computed from a digitized image of a fine needle aspirate (FNA) of a breast mass. They describe the characteristics of the cell nuclei present in the image. The label set $L=\{0, 1\}$ will denote Malignant (0) tumors and Benignant (1) tumors. The point set will be $P \subset \mathbb{R}^{30}$ where each sample represents the cell nuclei information of one image. Figure~\ref{fig:real_data} shows this dataset after applying a PCA process to visualize it from several 2-dimensional perspectives. \par

\begin{figure}[H]
\centering
\includegraphics[width=0.8\textwidth]{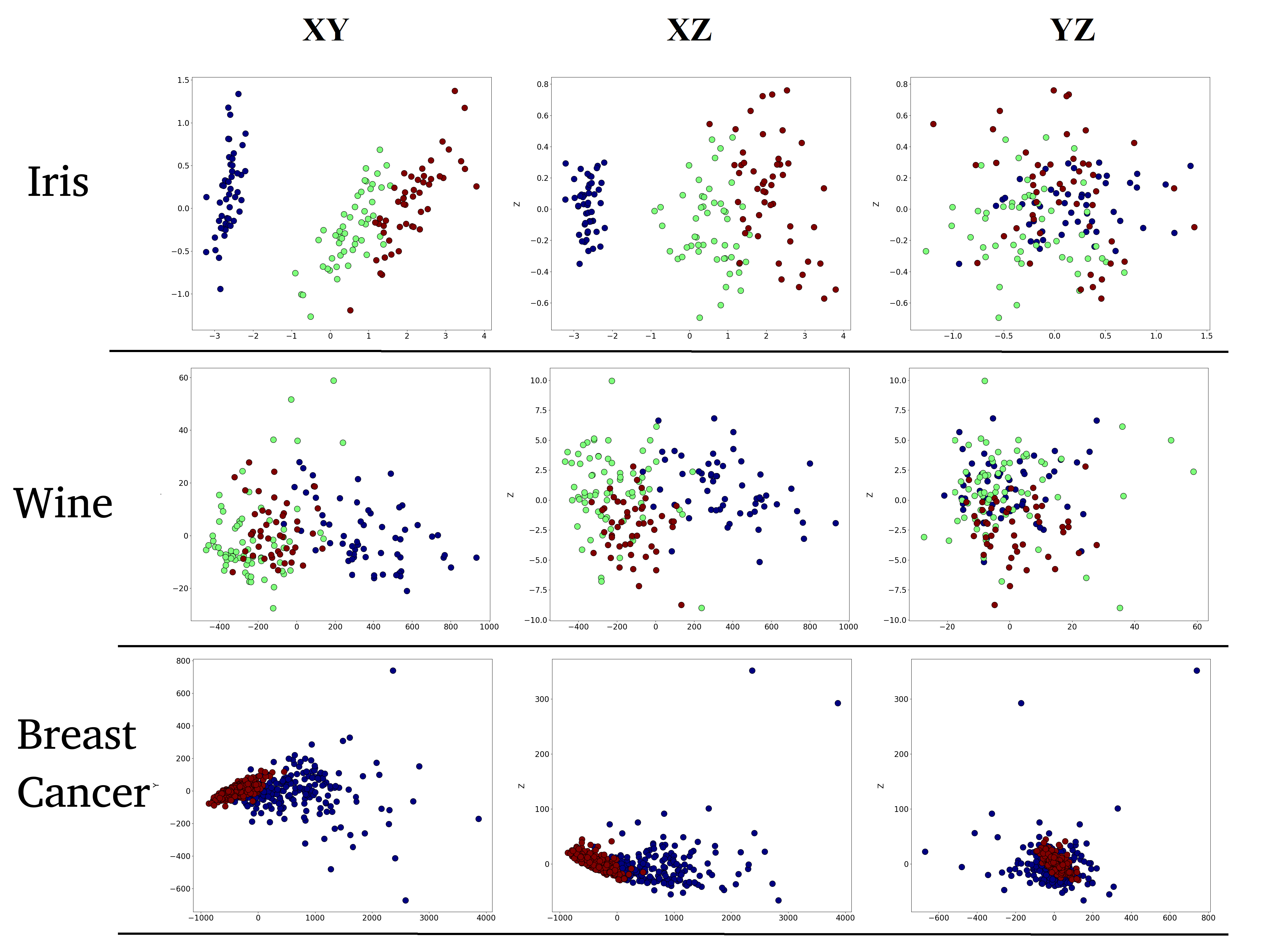}
\caption{The Iris, Wine, and Breast Cancer datasets, chosen as real datasets. }\label{fig:real_data}
\end{figure}

\subsection{Classifier Evaluation}

The classifier evaluation over all datasets were conducted using a Repeated cross-validation process (see~\ref{app:rcv} for details). The aim is to avoid biased results because the training and test sets are each time from the same dataset. 

Let $R$ be the fold size in the Repeated Cross-Validation approach to avoid confusion with the use of k in k-NN. The R-FOLD Cross Validation is then repeated 5 times (N=5), and R will be the 10\% of the selected dataset. For any value of R, $(\left \lceil \frac{P}{R}  \right\rceil -1)$ R-folds will be selected to be the training set $S$, and the remaining R-fold will be the test data $X$ in each iteration. When $2 \cdot R \geq \abs{P}$ the problem is considered to be semi-supervised, where there are more unknown samples to classify than there are labeled samples.  \par
It is common in ML algorithms to use parameters whose values are changed before the learning process begins. Those parameters are called hyper-parameters~\cite{scikit-learn, Japkowicz2011ELA, tom97ML}. For k-NN and wk-NN algorithms, a k=15 was considered a good number of neighbors; we obtained it using the hyper-parameter estimators from scikit-learn~\cite{scikit-learn}. For the three TDABC algorithms, the maximal simplex dimension needs to be fixed to a $q$ value to control the VR-complex construction process. Experiments were conducted with $3 \leq q \leq 10$. 
In~\ref{app:metrics}, a detailed explanation of selected metrics is given. Results presented were obtained on two computers: 8 GB RAM, Intel Core i7-6500U CPU 2.50GHz x 4, and 16 GB RAM, $\mbox{AMD Ryzen}^{TM}$ 7 3700U 2.30 GHz x 4.

\subsection{Comparison}

Each classifier is executed a total number of $W = N \cdot \ceil*{\frac{\abs{P}}{R}}$ times because of the repeated cross validation process. This process results in a total number of $W$ predicted collections $\hat{Y}_1, \hat{Y}_2, \hat{Y}_3, \cdots, \hat{Y}_W $ for each classifier. Similarly, a total number of real label collections $Y_1,Y_2, \cdots, Y_W$ results for each classifier. Those two lists of collections, $\{\hat{Y}_i\}_{1\leq i \leq W}$, and $\{Y_i\}_{1 \leq i \leq W}$ are concatenated by putting the collection $i+1$ at the end of the previous collection, which results in two big collections of predicted, and real labels.

    \begin{IEEEeqnarray}{rCl}
 \hat{Y} & = &(\hat{y}_1, \hat{y}_2, \cdots,\hat{y}_{\abs{P}}, \hat{y}_{\abs{P}+1}, \cdots, \hat{y}_{2\cdot \abs{P}},\cdots, \hat{y}_{N \cdot \abs{P}}), \\       \label{eq:pred}
Y & = & (y_1, y_2, \cdots, y_{\abs{P}}, y_{\abs{P}+1}, \cdots, y_{2 \cdot \abs{P}}, \cdots, y_{N \cdot \abs{P}}),         \label{eq:real}
\end{IEEEeqnarray}

Where $\hat{Y}_i = (\hat{y}_j)_{(i-1) \cdot \abs{P}+1 \leq j \leq i\cdot \abs{P}}$ is the predicted labels list and $Y_i = (y_j)_{(i-1) \cdot \abs{P}+1 \leq j \leq i\cdot \abs{P}}$ the real labels list, both resulting from $i^{th}$ execution of Repeated R-Fold. As the correspondence between each components of both predicted and real labels is maintained, it is easy to generalize all metrics' computation by considering $n = \abs{\hat{Y}} = \abs{Y}$. 

\subsubsection{General metrics computation result}

In Table~\ref{tb:metrics_results_001} and Table~\ref{tb:metrics_results_002}, all metrics results are shown. Each metric  was computed across all datasets. For each metric, columns from 2 to 10 represent the datasets, and rows represent each classifier results. The two last columns show the arithmetic mean and standard deviation of the corresponding metrics across all datasets. More details about the metrics' computations are presented in~\ref{app:metrics}.\par

The experiments were conducted per dataset for each fixed simplicial complex dimension on the interval $q \in [3, 9]$. Nevertheless, in this section, results are shown for one value of $q$ in each dataset. For example, from the Iris, Circles, and Sphere datasets, a fixed simplicial complex dimension $q=8$ was selected. As another example, for the Moon-dataset, results were selected for experiments conducted using a simplicial complex dimension $q=3$. In contrast, metrics results on the Swissroll and Wine datasets were selected for simplicial complex dimension $q=6$. On the other hand, the presented results were obtained for the Breast Cancer dataset using a fixed simplicial complex dimension $q = 4$. Meanwhile, from the Normdist dataset, results were selected using a fixed simplicial complex dimension $q=7$.\par

\begin{table}[]
\centering
\resizebox{\textwidth}{!}{%
\begin{tabular}{|l|l|l|l|l|l|l|l|l||l|l|}
\hline
\multicolumn{11}{|l|}{\textbf{Accuracy (ACC)}} \\ \hline
\textbf{Method} & \textbf{Circles} & \textbf{Moon} & \textbf{Swissroll} & \textbf{\begin{tabular}[c]{@{}l@{}}Norm\\ dist\end{tabular}} & \textbf{Sphere} & \textbf{IRIS} & \textbf{Wine} & \textbf{\begin{tabular}[c]{@{}l@{}}Breast \\ cancer\end{tabular}} & \textbf{Mean} & \textbf{Stdev} \\ \hline
\textbf{TDABC-A} & 0,668 & 0,516 & 0,919 & \textbf{0,993} & 0,918 & 0,961 & \textbf{0,769} & 0,91 & \textbf{0,832} & 0,167 \\ \hline
\textbf{TDABC-M} & \textbf{0,700} & 0,524 & 0,911 & 0,983 & \textbf{0,938} & 0,920 & 0,767 & 0,92 & \textbf{0,832} & 0,157 \\ \hline
\textbf{TDABC-R} & 0,676 & \textbf{0,530} & 0,913 & 0,990 & 0,925 & 0,936 & 0,768 & 0,92 & \textbf{0,832} & 0,159 \\ \hline
\textbf{wk-NN} & 0,468 & 0,456 & \textbf{0,943} & 0,990 & 0,901 & 0,977 & 0,739 & \textbf{0,93} & 0,801 & 0,223 \\ \hline
\textbf{k-NN} & 0,508 & 0,475 & 0,918 & 0,987 & 0,887 & \textbf{0,980} & 0,708 & \textbf{0,93} & 0,799 & 0,209 \\ \hline
\multicolumn{11}{|l|}{\textbf{Precision (PR)}} \\ \hline
\textbf{Method} & \textbf{Circles} & \textbf{Moon} & \textbf{Swissroll} & \textbf{\begin{tabular}[c]{@{}l@{}}Norm\\ dist\end{tabular}} & \textbf{Sphere} & \textbf{IRIS} & \textbf{Wine} & \textbf{\begin{tabular}[c]{@{}l@{}}Breast \\ cancer\end{tabular}} & \textbf{Mean} & \textbf{Stdev} \\ \hline
\textbf{TDABC-A} & 0,668 & 0,516 & 0,920 & \textbf{0,979} & 0,718 & 0,960 & \textbf{0,765} & 0,91 & 0,805 & 0,165 \\ \hline
\textbf{TDABC-M} & \textbf{0,700} & 0,524 & 0,911 & 0,971 & \textbf{0,747} & 0,917 & 0,763 & \textbf{0,92} & 0,806 & 0,151 \\ \hline
\textbf{TDABC-R} & 0,676 & \textbf{0,530} & 0,913 & 0,976 & 0,742 & 0,934 & 0,764 & \textbf{0,92} & \textbf{0,807} & 0,155 \\ \hline
\textbf{wk-NN} & 0,468 & 0,456 & \textbf{0,940} & 0,950 & 0,579 & 0,976 & 0,742 & \textbf{0,92} & 0,754 & 0,224 \\ \hline
\textbf{k-NN} & 0,508 & 0,475 & 0,914 & 0,933 & 0,549 & \textbf{0,979} & 0,701 & \textbf{0,92} & 0,747 & 0,213 \\ \hline
\multicolumn{11}{|l|}{\textbf{Recall (RE)}} \\ \hline
\textbf{Method} & \textbf{Circles} & \textbf{Moon} & \textbf{Swissroll} & \textbf{\begin{tabular}[c]{@{}l@{}}Norm\\ dist\end{tabular}} & \textbf{Sphere} & \textbf{IRIS} & \textbf{Wine} & \textbf{\begin{tabular}[c]{@{}l@{}}Breast \\ cancer\end{tabular}} & \textbf{Mean} & \textbf{Stdev} \\ \hline
\textbf{TDABC-A} & 0,668 & 0,516 & 0,692 & 0,964 & 0,454 & 0,927 & \textbf{0,622} & 0,90 & \textbf{0,718} & 0,193 \\ \hline
\textbf{TDABC-M} & \textbf{0,700} & 0,524 & 0,667 & 0,901 & \textbf{0,496} & 0,859 & 0,619 & 0,91 & 0,709 & 0,164 \\ \hline
\textbf{TDABC-R} & 0,676 & \textbf{0,530} & 0,673 & 0,941 & 0,472 & 0,885 & 0,621 & 0,91 & 0,713 & 0,178 \\ \hline
\textbf{wk-NN} & 0,463 & 0,456 & \textbf{0,764} & \textbf{0,966} & 0,400 & 0,957 & 0,590 & \textbf{0,93} & 0,691 & 0,243 \\ \hline
\textbf{k-NN} & 0,509 & 0,475 & 0,688 & 0,955 & 0,379 & \textbf{0,962} & 0,547 & \textbf{0,93} & 0,681 & 0,239 \\ \hline
\multicolumn{11}{|l|}{\textbf{True Negative Rate (TNR)}} \\ \hline
\textbf{Method} & \textbf{Circles} & \textbf{Moon} & \textbf{Swissroll} & \textbf{\begin{tabular}[c]{@{}l@{}}Norm\\ dist\end{tabular}} & \textbf{Sphere} & \textbf{IRIS} & \textbf{Wine} & \textbf{\begin{tabular}[c]{@{}l@{}}Breast \\ cancer\end{tabular}} & \textbf{Mean} & \textbf{Stdev} \\ \hline
\textbf{TDABC-A} & 0,668 & 0,516 & 0,983 & \textbf{0,999} & 0,981 & 0,981 & \textbf{0,873} & 0,90 & 0,863 & 0,178 \\ \hline
\textbf{TDABC-M} & \textbf{0,700} & 0,524 & 0,981 & 0,996 & \textbf{0,985} & 0,961 & 0,871 & 0,91 & \textbf{0,866} & 0,169 \\ \hline
\textbf{TDABC-R} & 0,676 & \textbf{0,530} & 0,981 & 0,998 & 0,982 & 0,969 & 0,872 & 0,91 & 0,864 & 0,171 \\ \hline
\textbf{wk-NN} & 0,463 & 0,456 & \textbf{0,988} & 0,998 & 0,977 & 0,989 & 0,859 & \textbf{0,93} & 0,833 & 0,235 \\ \hline
\textbf{k-NN} & 0,509 & 0,475 & 0,983 & 0,997 & 0,974 & \textbf{0,990} & 0,841 & \textbf{0,93} & 0,838 & 0,220 \\ \hline
\multicolumn{11}{|l|}{\textbf{False Positive Rate (FPR)}} \\ \hline
\textbf{Method} & \textbf{Circles} & \textbf{Moon} & \textbf{Swissroll} & \textbf{\begin{tabular}[c]{@{}l@{}}Norm\\ dist\end{tabular}} & \textbf{Sphere} & \textbf{IRIS} & \textbf{Wine} & \textbf{\begin{tabular}[c]{@{}l@{}}Breast \\ cancer\end{tabular}} & \textbf{Mean} & \textbf{Stdev} \\ \hline
\textbf{TDABC-A} & 0,332 & 0,484 & 0,080 & \textbf{0,021} & 0,282 & 0,040 & \textbf{0,235} & 0,09 & 0,195 & 0,165 \\ \hline
\textbf{TDABC-M} & \textbf{0,300} & 0,476 & 0,089 & 0,029 & \textbf{0,253} & 0,083 & 0,237 & \textbf{0,08} & 0,194 & 0,151 \\ \hline
\textbf{TDABC-R} & 0,324 & \textbf{0,470} & 0,087 & 0,024 & 0,258 & 0,066 & 0,236 & \textbf{0,08} & \textbf{0,193} & 0,155 \\ \hline
\textbf{wk-NN} & 0,532 & 0,544 & \textbf{0,060} & 0,050 & 0,421 & 0,024 & 0,258 & \textbf{0,08} & 0,246 & 0,224 \\ \hline
\textbf{k-NN} & 0,492 & 0,525 & 0,086 & 0,067 & 0,452 & \textbf{0,021} & 0,295 & \textbf{0,08} & 0,252 & 0,213 \\ \hline
\end{tabular}%
}
\caption{The Acc, Pr, Re, TNR, and FPR metric results per classifier across all datasets.}
\label{tb:metrics_results_001}
\end{table}

\begin{table}[]
\centering
\resizebox{\textwidth}{!}{%
\begin{tabular}{|l|l|l|l|l|l|l|l|l||l|l|}
\hline
\multicolumn{11}{|l|}{\textbf{F1-Measure (F1)}} \\ \hline
\textbf{Method} & \textbf{Circles} & \textbf{Moon} & \textbf{Swissroll} & \textbf{\begin{tabular}[c]{@{}l@{}}Norm\\ dist\end{tabular}} & \textbf{Sphere} & \textbf{IRIS} & \textbf{Wine} & \textbf{\begin{tabular}[c]{@{}l@{}}Breast \\ cancer\end{tabular}} & \textbf{Mean} & \textbf{Stdev} \\ \hline
\textbf{TDABC-A} & 0,668 & 0,516 & 0,789 & \textbf{0,971} & 0,504 & 0,943 & \textbf{0,683} & 0,91 & \textbf{0,748} & 0,185 \\ \hline
\textbf{TDABC-M} & \textbf{0,700} & 0,524 & 0,770 & 0,934 & \textbf{0,560} & 0,883 & 0,680 & 0,91 & 0,745 & 0,157 \\ \hline
\textbf{TDABC-R} & 0,676 & \textbf{0,530} & 0,774 & 0,958 & 0,527 & 0,906 & 0,682 & 0,91 & 0,745 & 0,170 \\ \hline
\textbf{wk-NN} & 0,450 & 0,455 & \textbf{0,842} & 0,954 & 0,433 & 0,966 & 0,648 & \textbf{0,93} & 0,709 & 0,240 \\ \hline
\textbf{k-NN} & 0,494 & 0,475 & 0,784 & 0,937 & 0,524 & \textbf{0,970} & 0,608 & \textbf{0,93} & 0,715 & 0,213 \\ \hline
\multicolumn{11}{|l|}{\textbf{Matthews Correlation Coefficient (MCC)}} \\ \hline
\textbf{Method} & \textbf{Circles} & \textbf{Moon} & \textbf{Swissroll} & \textbf{\begin{tabular}[c]{@{}l@{}}Norm\\ dist\end{tabular}} & \textbf{Sphere} & \textbf{IRIS} & \textbf{Wine} & \textbf{\begin{tabular}[c]{@{}l@{}}Breast \\ cancer\end{tabular}} & \textbf{Mean} & \textbf{Stdev} \\ \hline
\textbf{TDABC-A} & 0,336 & 0,032 & 0,752 & \textbf{0,967} & 0,478 & 0,915 & \textbf{0,514} & 0,82 & \textbf{0,601} & 0,321 \\ \hline
\textbf{TDABC-M} & \textbf{0,400} & 0,048 & 0,729 & 0,925 & \textbf{0,541} & 0,828 & 0,510 & 0,82 & 0,600 & 0,288 \\ \hline
\textbf{TDABC-R} & 0,352 & \textbf{0,060} & 0,735 & 0,952 & 0,506 & 0,861 & 0,512 & 0,82 & 0,600 & 0,300 \\ \hline
\textbf{wk-NN} & -0,069 & -0,088 & \textbf{0,815} & 0,950 & 0,384 & 0,950 & 0,465 & \textbf{0,86} & 0,533 & 0,432 \\ \hline
\textbf{k-NN} & 0,017 & -0,050 & 0,747 & 0,932 & 0,346 & \textbf{0,955} & 0,400 & 0,85 & 0,525 & 0,404 \\ \hline
\multicolumn{11}{|l|}{\textbf{Geometric Mean (GMean)}} \\ \hline
\textbf{Method} & \textbf{Circles} & \textbf{Moon} & \textbf{Swissroll} & \textbf{\begin{tabular}[c]{@{}l@{}}Norm\\ dist\end{tabular}} & \textbf{Sphere} & \textbf{IRIS} & \textbf{Wine} & \textbf{\begin{tabular}[c]{@{}l@{}}Breast \\ cancer\end{tabular}} & \textbf{Mean} & \textbf{Stdev} \\ \hline
\textbf{TDABC-A} & 0,668 & 0,516 & 0,824 & 0,981 & 0,614 & 0,954 & \textbf{0,735} & 0,90 & \textbf{0,774} & 0,169 \\ \hline
\textbf{TDABC-M} & \textbf{0,700} & 0,524 & 0,809 & 0,946 & \textbf{0,656} & 0,908 & 0,733 & 0,90 & 0,773 & 0,146 \\ \hline
\textbf{TDABC-R} & 0,676 & \textbf{0,530} & 0,812 & 0,969 & 0,632 & 0,925 & 0,734 & 0,90 & 0,773 & 0,156 \\ \hline
\textbf{wk-NN} & 0,463 & 0,456 & \textbf{0,869} & \textbf{0,982} & 0,522 & 0,973 & 0,709 & \textbf{0,93} & 0,738 & 0,231 \\ \hline
\textbf{k-NN} & 0,509 & 0,475 & 0,822 & 0,976 & 0,500 & \textbf{0,976} & 0,676 & \textbf{0,93} & 0,733 & 0,221 \\ \hline
\multicolumn{11}{|l|}{\textbf{Classification Error (CError)}} \\ \hline
\textbf{Method} & \textbf{Circles} & \textbf{Moon} & \textbf{Swissroll} & \textbf{\begin{tabular}[c]{@{}l@{}}Norm\\ dist\end{tabular}} & \textbf{Sphere} & \textbf{IRIS} & \textbf{Wine} & \textbf{\begin{tabular}[c]{@{}l@{}}Breast \\ cancer\end{tabular}} & \textbf{Mean} & \textbf{Stdev} \\ \hline
\textbf{TDABC-A} & 0,332 & 0,484 & 0,081 & \textbf{0,007} & 0,083 & 0,039 & \textbf{0,231} & 0,09 & \textbf{0,168} & 0,167 \\ \hline
\textbf{TDABC-M} & \textbf{0,300} & 0,476 & 0,089 & 0,017 & \textbf{0,062} & 0,080 & 0,233 & 0,08 & \textbf{0,168} & 0,157 \\ \hline
\textbf{TDABC-R} & 0,324 & \textbf{0,470} & 0,081 & 0,010 & 0,075 & 0,064 & 0,232 & 0,08 & \textbf{0,168} & 0,160 \\ \hline
\textbf{wk-NN} & 0,532 & 0,544 & \textbf{0,057} & 0,010 & 0,099 & 0,023 & 0,261 & \textbf{0,07} & 0,199 & 0,223 \\ \hline
\textbf{k-NN} & 0,492 & 0,525 & 0,082 & 0,013 & 0,113 & \textbf{0,020} & 0,292 & \textbf{0,07} & 0,201 & 0,209 \\ \hline
\end{tabular}%
}
\caption{The F1, MCC, GMean, and CErr metric results per classifier across all datasets.}
\label{tb:metrics_results_002}
\end{table}

Table~\ref{tb:summary_table} summarizes the classifiers' average performance. It was built using the two last columns of Tables~\ref{tb:metrics_results_001} and Table~\ref{tb:metrics_results_002}, which represent the mean and the standard deviation of each metric across all datasets. 

\begin{table}[]
\centering
\resizebox{\textwidth}{!}{%
\begin{tabular}{|l|l|l|l|l|l|}
\hline
\textbf{Name} & \textbf{TDABC-A} & \textbf{TDABC-M} & \textbf{TDABC-R} & \textbf{wk-NN} & \textbf{k-NN} \\ \hline
\textbf{Acc} & \boldmath $0,832\pm 0,17$ & \boldmath $0,832\pm 0,16$ & \boldmath $0,832\pm 0,16$ & $0,801\pm 0,22$ & $0,799\pm 0,21$ \\ \hline
\textbf{Pr} & $0,805\pm 0,17$ & $0,806 \pm 0,15$ & \boldmath {$0,807\pm 0,15$} & $0,754\pm 0,22$ & $0,747\pm 0,21$ \\ \hline
\textbf{Re} & \boldmath {$0,718\pm 0,19$} & $0,709\pm 0,16$ & $0,713\pm 0,18$ & $0,691\pm 0,24$ & $0,681\pm 0,24$ \\ \hline
\textbf{TNR} & $0,863\pm 0,18$ & \boldmath{$0,866\pm 0,17$} & $0,864 \pm 0,17$ & $0,833 \pm 0,23$ & $0,837\pm 0,22$ \\ \hline
\textbf{FPR} & $0,195\pm 0,17$ & $0,194\pm 0,15$ & \boldmath{$0,193\pm 0,15$} & $0,246 \pm 0,22$ & $0,252 \pm 0,21$ \\ \hline
\textbf{F1} & \boldmath{$0,75 \pm 0,18$} & \boldmath{$0,75\pm 0,16$} & \boldmath{$0,75\pm 0,17$} & $0,71\pm 0,24$ & $0,71 \pm 0,21$ \\ \hline
\textbf{MCC} & \boldmath{$0,601\pm 0,32$} & $0,600\pm 0,29$ & $0,600 \pm 0,30$ & $0,533\pm 0,43$ & $0,525\pm 0,40$ \\ \hline
\textbf{GMEAN} & \boldmath{$0,774\pm 0,17$} & $0,773\pm 0,15$ & $0,773 \pm 0,16$ & $0,738\pm 0,23$ & $0,733\pm 0, 22$ \\ \hline
\textbf{CErr} & \boldmath{$0,168\pm 0,17$} & \boldmath{$0,168 \pm 0,16$} & \boldmath{$0,168 \pm 0,16$} & $0,199\pm 0,22$ & $0,201\pm 0,21$ \\ \hline
\end{tabular}%
}
\caption{Summary table with the arithmetic mean of the classifiers across all analyzed data sets. Each mean result and standard deviation is shown for the performance in this metric across all datasets.}
\label{tb:summary_table}
\end{table}

\subsubsection{Selected confusion matrices}

For a graphical visualization of the evaluated classifiers' performance, 40 confusion matrices were created, each one corresponding to a classifier in a dataset (5 classifiers, 8 datasets). Nonetheless, only matrices for Iris, Circles, Moon and Sphere datasets are shown in this section. All confusion matrices can be seen in~\ref{app:matrices}. 
\begin{figure}[H]
\begin{center}
\includegraphics[width=0.80\textwidth]{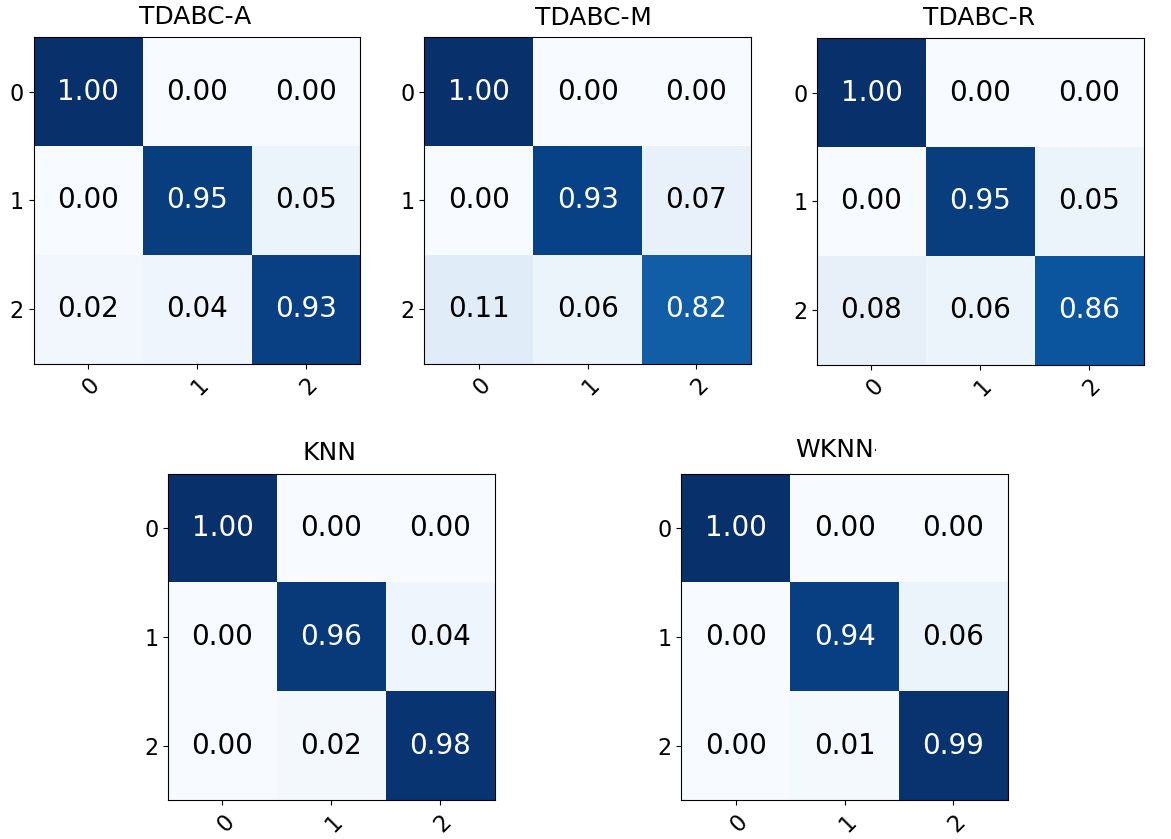}

\caption{Confusion matrices from classifiers' results on the Iris dataset. The filtered simplicial complexes used by TDABC classifiers were built up to a maximal dimension $q=8$.}\label{fig:iris_CM}    
\end{center}
\end{figure}

\begin{figure}[H]
\begin{center}
\includegraphics[width=0.80\textwidth]{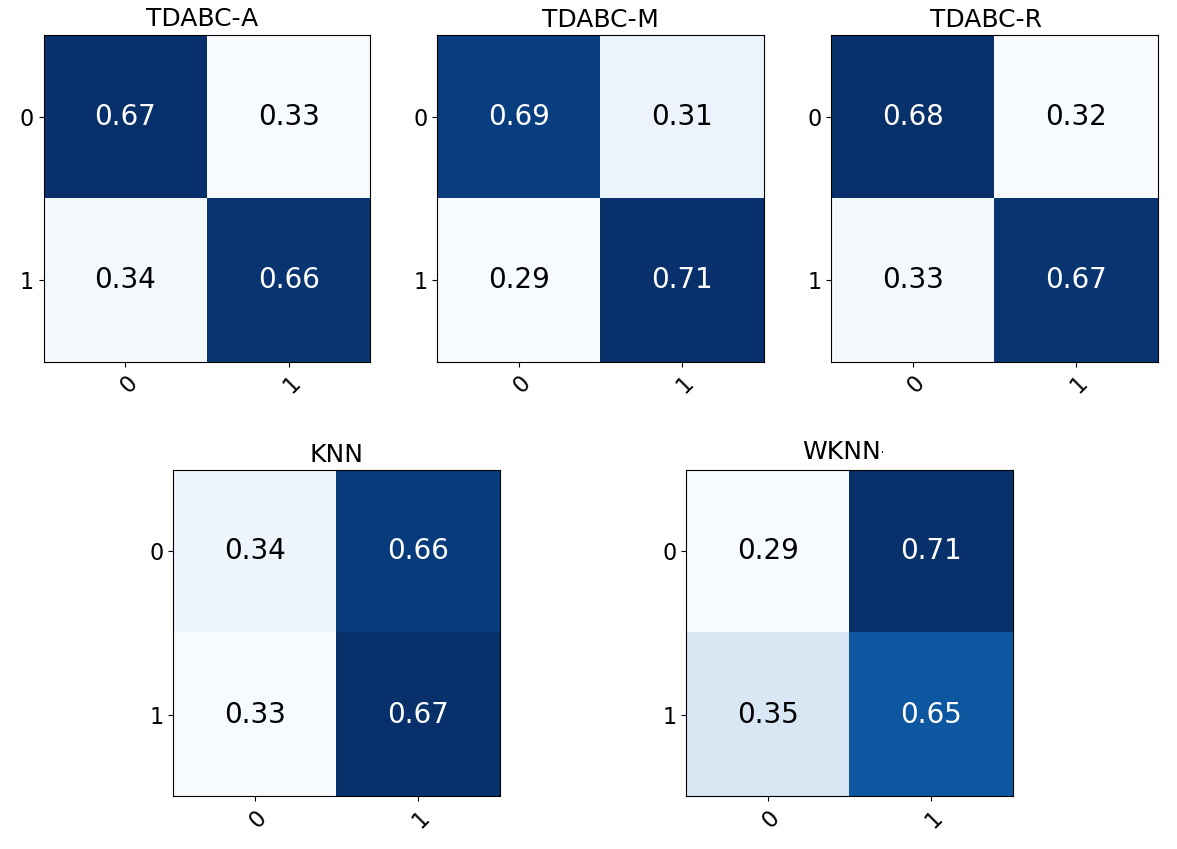}
\caption{Confusion matrices from classifiers' results on the Circles dataset. The filtered simplicial complexes used by TDABC classifiers were built up to a maximal dimension $q=8$.}\label{fig:ALL_circle_CM}    
\end{center}
\end{figure}

\begin{figure}[H]
\begin{center}
\includegraphics[width=0.80\textwidth]{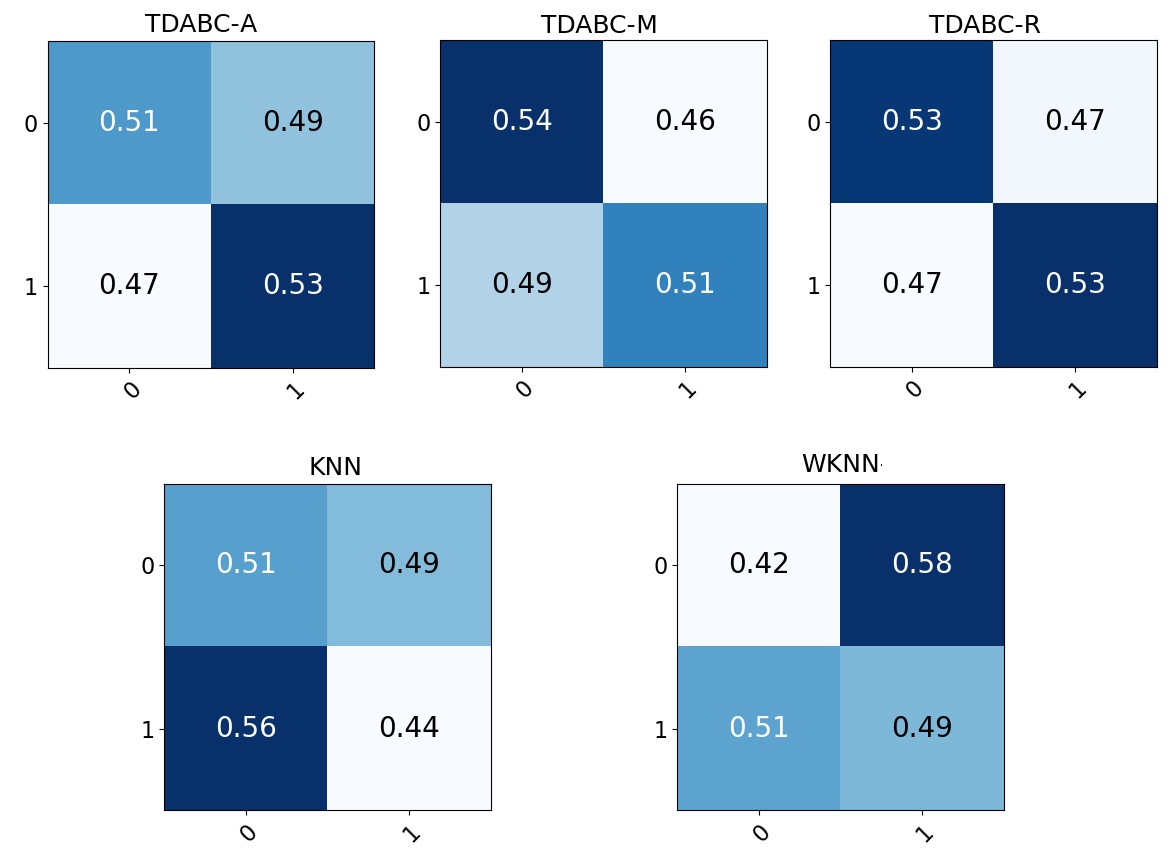}
\caption{Confusion matrices from classifiers' results on the Moon dataset. The filtered simplicial complexes used by TDABC classifiers were built up to a maximal dimension $q=3$.}\label{fig:ALL_moon_CM}    
\end{center}
\end{figure}

\begin{figure}[H]
\begin{center}
\includegraphics[width=0.80\textwidth]{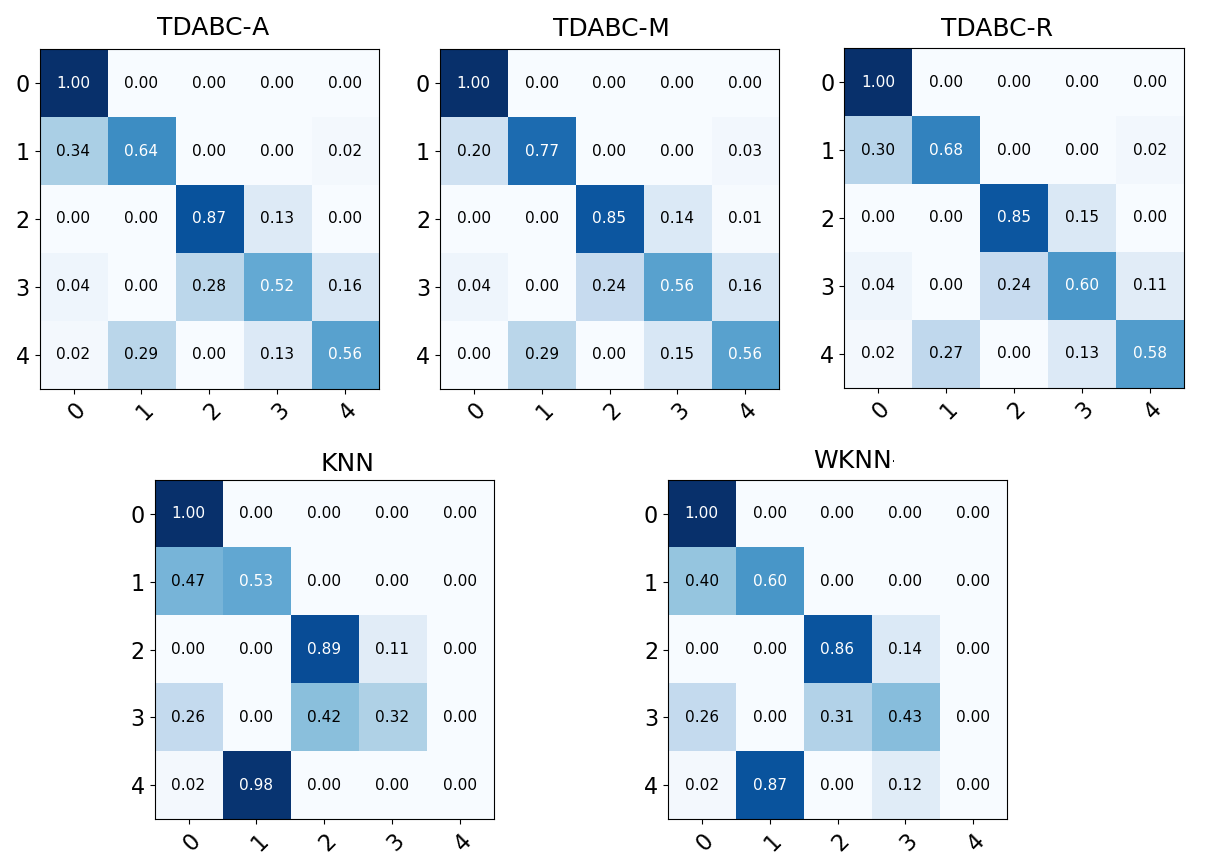}
\caption{Confusion matrices from classifiers' results on the Sphere dataset. The filtered simplicial complexes used by TDABC classifiers were built up to a maximal dimension $q=8$.}\label{fig:ALL_CM1}    
\end{center}
\end{figure}
  
\section{Discussion}\label{scc:discussion}

The discussion section is organized into three subsections. First, the analysis of results, followed by a highlight of the proposed method's most relevant characteristics and a discussion of related works.

\subsection{Results}

Average performance was computed with the arithmetic mean, geometric mean, and harmonic mean in all datasets. Algorithm ranking remained across means' computations; thus, only arithmetic mean is shown in Table~\ref{tb:summary_table}. \par
By analyzing results per dataset independently, it could be noted that the TDABC approaches were superior to wk-NN and k-NN in 5 of the 8 evaluated datasets, specifically on the Circles, Moon, Normdist, Sphere, and Wine datasets (see Table~\ref{tb:metrics_results_001} and Table~\ref{tb:metrics_results_002}). On the other hand, baseline methods were slightly better on the three remaining datasets.\par
The Circles, Moon, Normdist, Sphere, and Wine datasets have different challenging features such as high dimensionality, the imbalanced distribution of labels, and highly entanglement classes. Despite these challenges, TDABC approaches overcome baseline methods in every computed metric.\par

The Circles and Moon datasets are balanced and have very entangled classes due to the noise factor, making the classification a challenge. In these datasets, wk-NN and k-NN behave poorly, as observed through the negative values obtained after applying the MCC measure (see Table~\ref{tb:metrics}). This behavior is related to the fixed value of k and to the assumption that each data point is equally relevant. Even though wk-NN imposes a local data point weighting based on distances, it is not enough with highly entangled classes, as our results show. The TDABC methods are capable of dealing with the entanglement challenge through a disambiguation factor based on filtration values ($\xi_{\mathcal{K}}$). \par
In the case of the Normdist and Sphere datasets, there is a high imbalanced ratio of the classes, with $[60, 10, 50, 100, 80]$ and $[500, 100, 25, 16, 12]$ samples per class, respectively. In this situation, it is important to have dynamic neighborhoods. The proposed method generates dynamic-sized ``neighborhoods'' for each point, in contrast to~k-NN and~wk-NN classifiers. In the high imbalance case, the disambiguation factor  ($\xi_{\mathcal{K}}$) also provides a multi-scale local weighting to TDABC methods.\par
The Wine and Normdist datasets are highly dimensional (13 and 350 dimensions, respectively). TDABC methods behave better than baseline approaches for those datasets in all metrics. k-NN and wk-NN use Euclidean distance directly to detect their k neighbors. Even though TDABC methods also use Euclidean distance, they are still able to unravel multi-scale, multi-dimensional relationships among data, which better handle high dimensionality. \par

Swissroll and Iris are balanced datasets, with $50$ samples per class. In contrast, the Breast Cancer dataset has $[212, 357]$ samples per class. In the Swissroll and Iris datasets, weighted k-NN and k-NN were better, respectively, than proposed classifiers in all metrics. Interestingly, for the Breast Cancer dataset, which is slightly imbalanced, TDABC was equally performant in 2 out of 9 metrics.\par

\subsection{Key aspects of the proposed method}

Regarding the proposed TDA-based classification methodology, two key aspects are discussed: persistent homology and voting system. 


The first aspect is related to persistent homology's key role in selecting the desired sub-complex from a filtered simplicial complex built on the dataset. Algorithm~\ref{alg:piD} reduces the search space in the filtered simplicial complex by taking advantage of topological features encoded inside selected persistence intervals.~Although, selecting the right sub-complex is a very challenging problem~\cite{Caillerie2011}, the simple criterion we propose (death time of persistence intervals resulting from MaxInt, RandInt, and AvgInt) is sufficient to achieve good classification results. \par
Despite the birth time's theoretical guarantees to selecting the sub-complex, the middle and death times might be useful depending on the dataset structure and complexity. Experimentally, promissing results were obtained using both middle and death time. However, death time was better experimentally because it can reach more stable topological features and minimize the presence of isolated points. This process is summarized in Figure~\ref{fig:circlepinfo}. \par


The second aspect is the proposed voting system (see Definition~\ref{def:assoc_func_psi}), which gives richer information than the one used in the classification. During the voting system execution, a fundamental stage is the label propagation performed by the labeling function from Definition~\ref{def:lab_func}. The result of the labeling function could also be represented by a contribution vector $\Upsilon(\sigma) \equiv v \in \mathbb{R}^{\abs{L}}$, with each component $i$ being the contribution of each label $l_i \in L$. By normalizing $v$, the probabilities of $\sigma$ to belong to each component's class is obtained. Thus, the voting system provides the probability of each class, allowing, for instance, the use of ensemble techniques.\par  

\subsection{Related methods}

In other related approaches, the authors of~\cite{Zhang2017} proposes the Rare-class Nearest Neighbour (KRNN), a k-NN variant to deal with the sparsity of the positive samples on an imbalanced dataset. The KRNN uses dynamic local query neighborhoods that contain at least k positive nearest neighbors (a member of minority classes). In~\cite{Vuttipittayamongkol2020}, a different approach is proposed to deal with imbalanced datasets, focusing on negative samples (from majority class) in contrast to~\cite{Zhang2017}. They experimentally prove that negative samples on the overlapping region cause the most inaccuracies on classification. Thus, a neighbor-based algorithm is proposed in~\cite{Vuttipittayamongkol2020} that removes negative samples from the overlapped area. \par
Both~\cite{Zhang2017} and~\cite{Vuttipittayamongkol2020} were built to deal with two-classes imbalanced classification problems successfully. However, when applied to multi-class imbalanced problems, several issues arose in both methods, mostly related to the ambiguity of determining if an instance is a positive or a negative one. In multi-classes imbalanced problems, the same class $l_i$ could play both roles simultaneously because it could be a minority class concerning a class $l_j$, but it could be majority respect to a class $l_k$. 
Closely related testing scenarios were the Normdist and Sphere datasets, where the proposed TDABC method was experimentally evaluated. The proposed method obtains good classification rates on minority classes, and it was also able to deal with the overlapping area because of its disentanglement properties.\par
Recent TDA works still consider TDA as a complement of ML tasks. Works as~\cite{ATIENZA2020107509, Riihimaki2020} focus on discovering better ways to transform persistent homology representations into topological features for deep learning pipelines or sophisticated ML methods. In~\cite{ATIENZA2020107509}, the stability of persistent entropy is provided, justifying its application as a useful statistic in topological data analysis. In~\cite{Riihimaki2020}, TDA is applied to bioinformatics by proposing a novel algorithm based on another major TDA tool called the Mapper algorithm, used to visualize and interpret low and high volume of data (see~\cite{Carlsson:Bulletin}), and built a ML classifier on top of the mapper generated graphs. In~\cite{MAJUMDAR2020113868}, Self-Organized Maps were combined with TDA tools to cluster and classify time series in the financial domain with competitive results. In this context, our work is an example of a fully TDA-based approach applied to supervised learning, with a preliminary version shown as two technical reports in~\cite{rolandokindelanmauriciocerdanancyhitschfeld2020, rknmcnh2020}.\par

\section{Conclusions}\label{scc:conclusions}

In this work, TDA was applied directly in a classification problem and evaluated in 8 datasets, including imbalanced and high dimensionality ones, with good results compared to baseline methods. Overall, we show that Topological Data Analysis alone can classify without any ML method. To our knowledge, this is the first study that proposes this approach for classification. \par 
The proposed TDA-based classification method propagates labels from labeled points to unlabeled ones over the built filtered simplicial complex. The filtration values were interpreted as indirect distance indicators to provide a natural disambiguation method to label-contributions. \par 
The use of persistent homology was key to reduce the search space's complexity by providing the topological features needed to select a sub-complex close enough to the data topology and use it for classification. 
 
\section*{Declaration of Competing Interest}

The authors declare that they have no known competing financial interests or personal relationships that could have appeared to influence the work reported in this paper.

\section*{Acknowledgments}

This research work was supported by the National Agency for Research and Development of Chile (ANID), with grants ANID 2018/BECA DOCTORADO NACIONAL-21181978, FONDECYT 1181506, ICN09\_015, and PIA ACT192015. Beca postdoctoral CONACYT (Mexico) also supports this work. The first author would like to thank professor José Carlos Gómez-Larrañaga from CIMAT, Mexico, due to his support and collaboration. 

\appendix

\section{Repeated cross-validation process}~\label{app:rcv}

The performance evaluation of any classifier in a multi-class classification problem is a difficult task. 
One of the most significant issues is ensuring that the assessment does not make any assumption about data distributions or the classifier. Another problem to address is guaranteeing the testing robustness against bias, overfitting, and underfitting. A well-known approach is to use a cross-validation method~\cite{BROWNE2000108,scikit-learn, Japkowicz2011ELA, tom97ML}. Cross-validation aims to divide the data set $P$ into equal pieces or folds of size R, one of those pieces is selected to be the test set $X$, and the ($\frac{\abs{P}}{R}-1$) remaining folds are considered the training set $S$. This process continues until the last fold is selected to be $X$. However, since all folds are taken from the same dataset, sometimes a fold is a test set and, at other times, it is part of the training set. This process makes Cross Validation biased. One way to overcome this issue is by making a repeated cross validation process. This means repeating the R-Fold cross validation process N times. This method is called Repeated Cross-Validation (see  Figure~\ref{fig:CV}).

\begin{figure}[H]
\begin{center}
\includegraphics[width=0.60\textwidth]{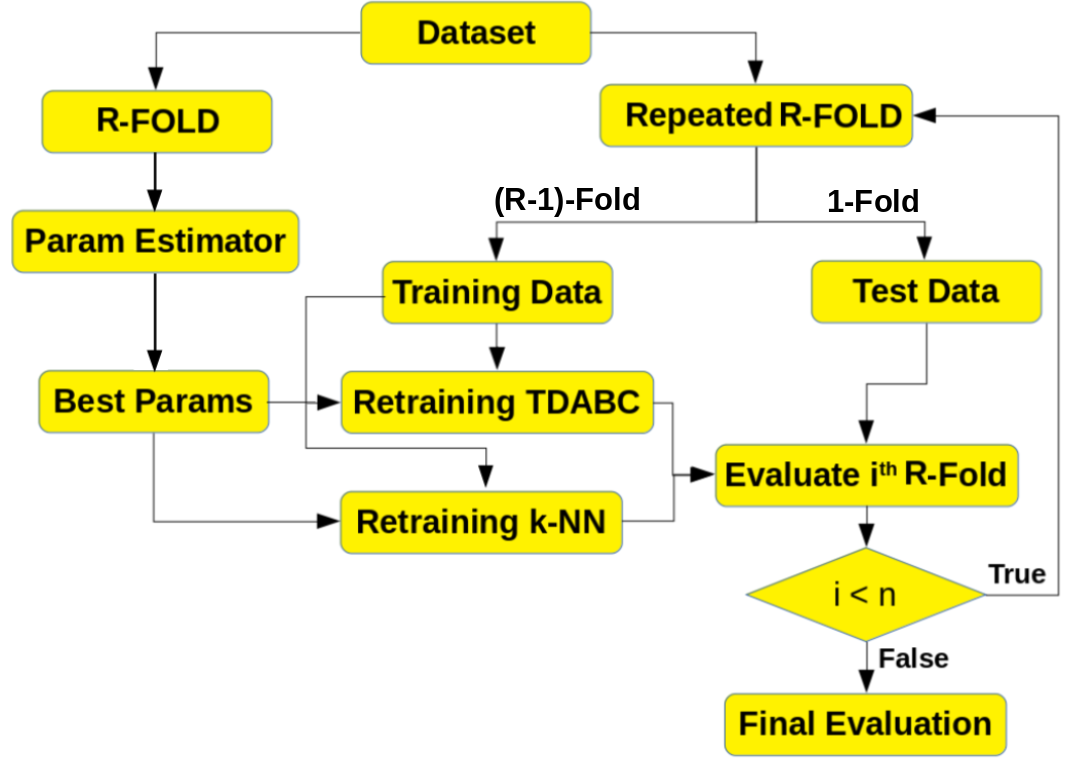}
\caption{Repeated Cross Validation overall process to compare TDABC proposed variants and the baseline classifiers.}\label{fig:CV}    
\end{center}
\end{figure}

\section{Metrics for Classifiers Evaluation}\label{app:metrics}

Several metrics need to be considered to evaluate the proposed and baseline classifiers' performance. The classification metrics are computed as functions of the True Positives ($TP$), True Negatives ($TN$), False Positives ($FP$), and False Negatives ($FN$)'s values. Once those primitive values are computed, it is possible to compute several classification metrics such as Accuracy ($Acc$), Precision ($Pr$), Recall ($Re$), False Positive Rate ($FPR$), False Negative Rates ($FNR$), F1-measure ($F1$ or Harmonic Measure of $Pr$ and $Re$), Matthews Correlation Coefficient ($MCC$), Geometric Mean (GMEAN), and Classification Error ($MSE$). \par
On the other hand, real and predicted label collections definitions are needed to compute the metrics. Let $Y$ be the real label list $\forall x \in X$. Let $\hat{Y}$ be the predicted label list $\forall x \in X$ computed by Algorithm~\ref{alg:TDABC}, where $n = \abs{Y} = \abs{\hat{Y}} = \abs{X}$. The following sections explain the metrics' computation. \par

\subsection{True positives, True negatives, False Positives, and False Negatives}\label{scc:fp}

A True positive sample is a sample successfully classified as belonging to the True label (the critical or most important one). A True Negative value is a sample successfully classified as to be labeled with a negative label. A False positive value is a mislabeled sample with a true label, and a False negative is a mislabeled sample with a negative label. \par  
In a multi-class classification problem (more than two classes), it is more difficult to determine true and negative classes. In this paper, each class is considered a true class, the remaining classes will be the negative ones, and this process is repeated to cover all classes as true classes. In this process, $TP_l, TN_l, FP_l, \mbox{ and } FN_l$ are computed for each label $l \in L$ with L the label set, from Equation~\ref{eq:fp-ilist} to Equation~\ref{eq:fp-elist}

\begin{IEEEeqnarray}{rCl}\label{eq:fp-ilist}
    TP_l & = & \sum_{i = 1}^{n} \mathcal{I}(l = \hat{y}_i) \cdot \mathcal{I}(\hat{y}_i = y_i), \\
    FP_l & = & \sum_{i = 1}^{n} \mathcal{I}(l = \hat{y}_i) \cdot \mathcal{I}(\hat{y}_i \neq y_i), \\
    TN_l & = & \sum_{i = 1}^{n} \mathcal{I}(l \neq \hat{y}_i) \cdot \mathcal{I}(\hat{y}_i = y_i), \\
    FN_l & = & \sum_{i = 1}^{n} \mathcal{I}(l \neq \hat{y}_i) \cdot \mathcal{I}(\hat{y}_i \neq y_i).\label{eq:fp-elist}
\end{IEEEeqnarray}

\subsection{Metrics computation for binary and multi-class classification}\label{scc:acc}

Binary classification is the setting where only two classes are taken into consideration. In this scenario, popular metrics are:

\begin{itemize}
    \item \textbf{Accuracy (Acc):} Percentage of correct predictions over the total samples.
    \item \textbf{Precision:} Number of items correctly identified as positive over the total of positive items.
    \item \textbf{Recall, Sensitivity or True Positive Rate (Re):} Number of items correctly identified as positive out of total true positives.
    \item \textbf{True Negative Rate or Specificity (TNR):} Number of items correctly identified as negative out of total negatives.
    \item \textbf{False Positive Rate or Type I Error (FPR):} Number of items wrongly identified as positive out of total true negatives.
    \item \textbf{False Negative Rate or Type II Error (FNR):} Number of items wrongly identified as negative out of total true positives.
    \item \textbf{F1-Measure:} This measure summarizes Pr and Re in a single metric. It is known to be the harmonic mean from both. It mitigates the impact of the high rate but also accentuates the lower rates' impact.
    \item \textbf{Matthews Correlation Coeficcient (MCC):} A measure unaffected by the imbalanced datasets issue. MCC is a contingency matrix method obtained from calculating the Pearson correlation coefficient between real and predicted values.
    \item \textbf{Geometric Mean (GMean):} The geometric mean corresponds to the square root of the product of the Recall and True Negative Rate. It is commonly used to understand the classifier behavior with imbalanced datasets.     
    \item \textbf{Classification Error (CErr):} Percentage of misclassification over the total samples.
\end{itemize}

\begin{table}[]
\resizebox{\textwidth}{!}{%
\begin{tabular}{|l|c|l|l|l|}
\hline
\textbf{\begin{tabular}[c]{@{}l@{}}Metric\\ Name\end{tabular}} & \multicolumn{1}{l|}{\textbf{Equation}}                                                                                      & \textbf{\begin{tabular}[c]{@{}l@{}}Defined \\ Interval\end{tabular}} & \textbf{Worst} & \textbf{Better} \\ \hline
\textbf{Acc}                                                   & $\frac{1}{\abs{L}}  \cdot  \sum_{l \in L} \frac{TP_l+TN_l}{TP_l+TN_l+FP_l+FN_l}$                                                                                           & {[}0,1{]}                                                            & 0              & 1               \\ \hline
\textbf{Pr}                                                    & $\frac{1}{\abs{L}}  \cdot  \sum_{l \in L} \frac{TP_l}{TP_l+FP_l}$                                            & {[}0,1{]}                                                            & 0              & 1               \\ \hline
\textbf{Re}                                                    & $\frac{1}{\abs{L}}  \cdot  \sum_{l \in L} \frac{TP_l}{TP_l+FN_l}$                                                           & {[}0,1{]}                                                            & 0              & 1               \\ \hline
\textbf{TNR}                                                   & $\frac{1}{\abs{L}}  \cdot  \sum_{l \in L} \frac{TN_l}{TN_l+FP_l}$                                                           & {[}0,1{]}                                                            & 0              & 1               \\ \hline
\textbf{FPR}                                                   & $\frac{1}{\abs{L}}  \cdot  \sum_{l \in L} \frac{FP_l}{TN_l+FP_l}$                                                           & {[}0,1{]}                                                            & 1              & 0               \\ \hline
\textbf{F1}                                                    & $\frac{1}{\abs{L}}  \cdot  \sum_{l \in L} \frac{2 \cdot TP_l}{2 \cdot TP_l + FP_l+FN_l}$ & {[}0,1{]}                                                            & 0              & 1               \\ \hline
\textbf{MCC}                                                   & $\frac{1}{\abs{L}}  \cdot  \sum_{l \in L} \frac{ TP_l \cdot TN_l - FP_l\cdot FN_l}{\sqrt{(TP_l + FP_l)\cdot(TP_l+FN_l)\cdot(TN_l+FP_l)\cdot(TN+FN)}}$                & {[}-1,1{]}                                                           & -1             & 1               \\ \hline
\textbf{GMEAN}                                                 & $\frac{1}{\abs{L}} \cdot \sum_{l \in L} \sqrt{\frac{TN_l \cdot TP_l}{(TN_l+FP_l)\cdot(TP_l+FP_l)}}$                                                                                                       & {[}0,1{]}                                                            & 0              & 1               \\ \hline
\textbf{CErr}                                                  & $\frac{1}{\abs{n}}  \cdot  \sum_{i = 1}^n \mathcal{I}(\hat{y}_i \neq y_i)$                                                  & {[}0,1{]}                                                            & 1              & 0               \\ \hline
\end{tabular}%
}
\caption{Classifier evaluation metrics information. Macro-averaging is performed in each metric to generalize it to a multi-class classification problem. This approach is valid even for the case of binary classification. }
\label{tb:metrics}
\end{table}

On the other hand, for more than two classes the problem is named multi-class classification. A common way to address the classifiers' assessment in this setting is to consider a One-vs-All configuration. It consists of taking one class as the positive class, considering the remaining classes as negative ones, and repeating this process for every class. In this case, it is necessary to make metric generalizations to a multi-class environment. A popular method is averaging the metrics through micro-averaging or macro-averaging. Micro-averaging considers all elements $TP_l, TN_l, FP_l$ and $FN_l$ to extend the two-class metric equations. In contrast, macro-averaging considers the per-class metrics' performance. In this paper, a macro-averaging is performed for each metric. Table~\ref{tb:metrics} summarizes this computation. \par

Those metrics are computed for every iteration of the repeated R-Fold strategy (\ref{app:rcv}).

\section{Confusion Matrices}\label{app:matrices}

A confusion matrix is a specific table layout that allows the visualization of the performance of a supervised learning algorithm. Each row of the matrix represents the instances in a real class, while each column represents the instances in a predicted class (or vice versa). The vectors $ Y, \hat{Y} $ are used to construct each confusion matrix. \par
For each dataset, 5 confusion matrices were created, one matrix for each classifier. The three corresponding matrices of TDABC-R, TDABC-M, and TDABC-A proposed methods will be placed on the first row. The remaining two matrices will correspond to the baseline algorithms k-NN and wk-NN, and they will be arranged as a second row.

\begin{figure}[H]
\begin{center}
\includegraphics[width=0.80\textwidth]{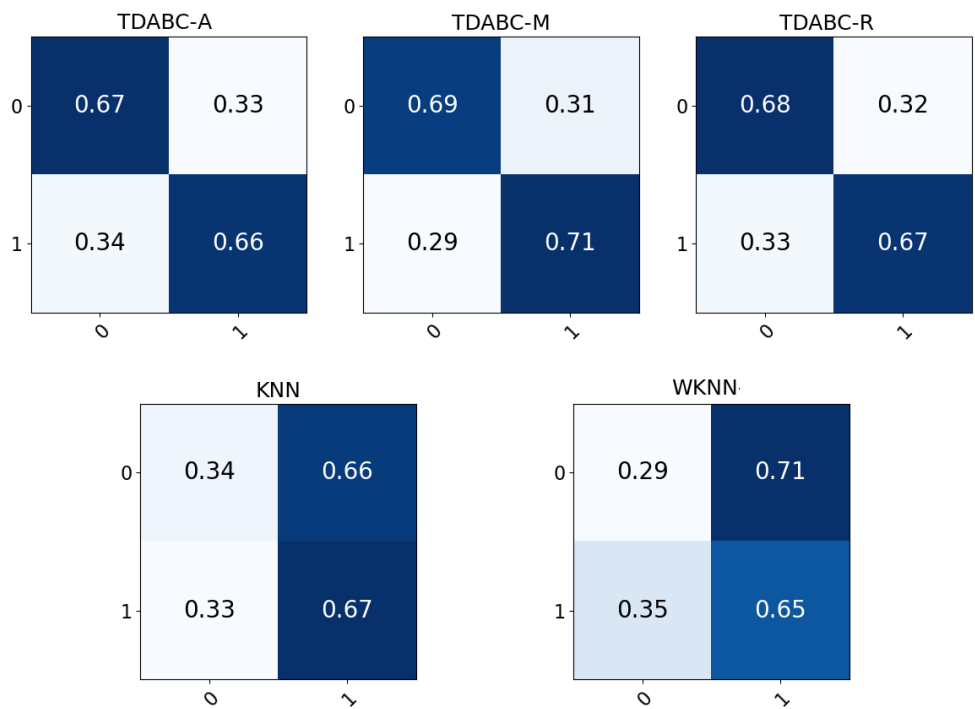}
\caption{ Confusion matrices from classifiers' results on the Circles dataset. The filtered simplicial complexes used by TDABC classifiers were built up to a maximal dimension $q=8$.}\label{fig:ALL_circle_CM0}    
\end{center}
\end{figure}

\begin{figure}[H]
\begin{center}
\includegraphics[width=0.80\textwidth]{moon_OVERALL-RIPS-3d-NORMAL_confusion_matrices.png}
\caption{Confusion matrices from classifiers' results on the  Moon dataset. The filtered simplicial complexes used by TDABC classifiers were built up to a maximal dimension $q=3$.}\label{fig:ALL_moon_CM0}    
\end{center}
\end{figure}

\begin{figure}[H]
\begin{center}
\includegraphics[width=0.80\textwidth]{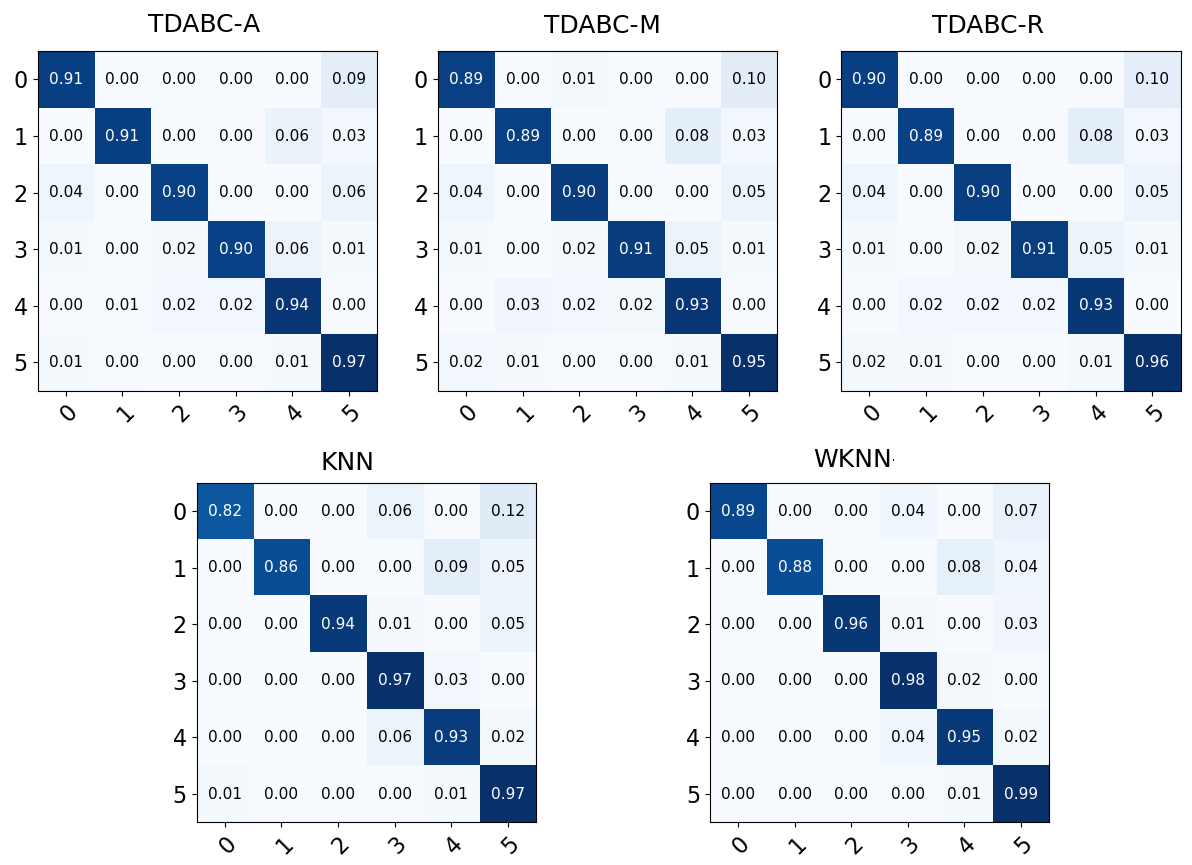}
\caption{Confusion matrices from classifiers' results on the Swissroll dataset. The filtered simplicial complexes used by TDABC classifiers were built up to a maximal dimension $q=6$.}\label{fig:ALL_swissroll_CM0}    
\end{center}
\end{figure}

\begin{figure}[H]
\begin{center}
\includegraphics[width=0.80\textwidth]{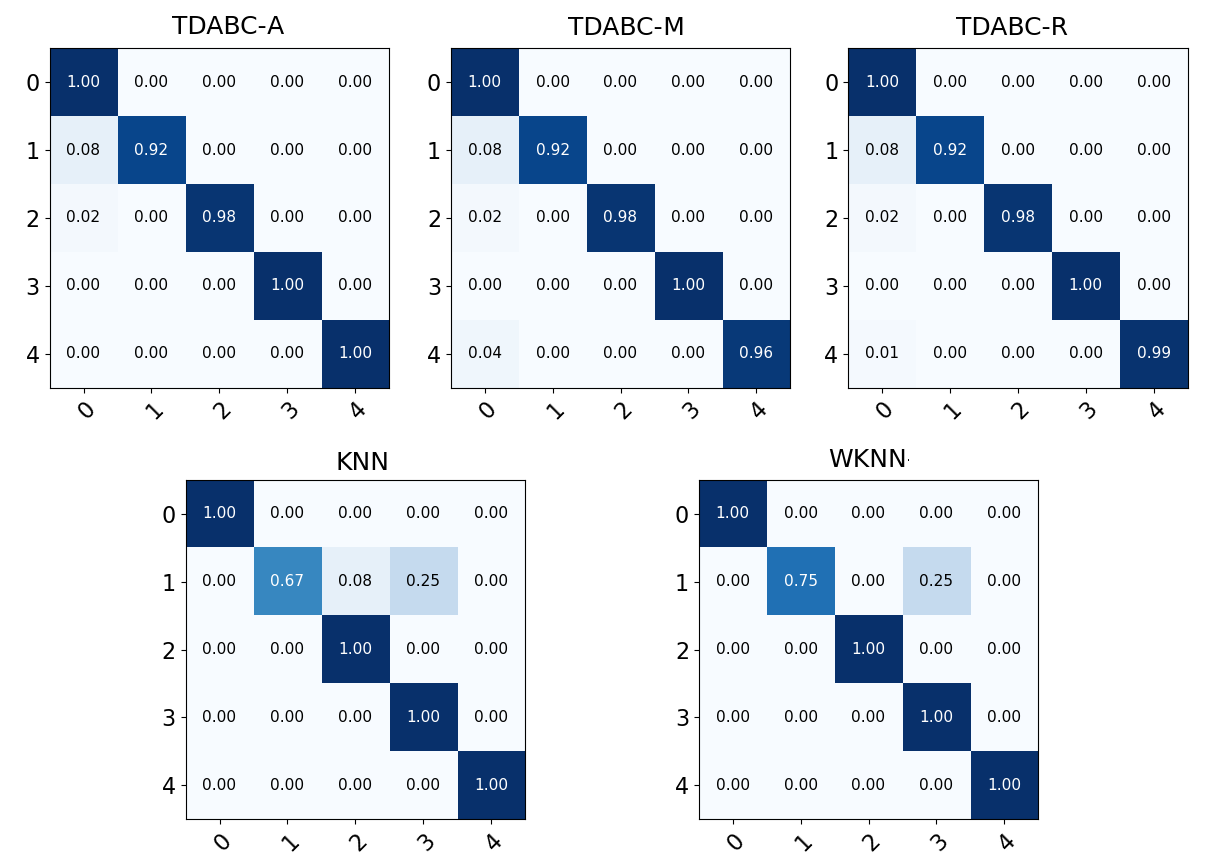}
\caption{Confusion matrices from classifiers' results on the Normdist dataset. The filtered simplicial complexes used by TDABC classifiers were built up to a maximal dimension $q=7$.}\label{fig:NORMAL_ALL_CM0}    
\end{center}
\end{figure}
\begin{figure}[H]
\begin{center}
\includegraphics[width=0.80\textwidth]{sphere_OVERALL-RIPS-8d-NORMAL_confusion_matrices.png}
\caption{Confusion matrices from classifiers' results on the Sphere dataset. The filtered simplicial complexes used by TDABC classifiers were built up to a maximal dimension $q=8$.}\label{fig:ALL_CM0}    
\end{center}
\end{figure}
\begin{figure}[H]
\begin{center}
\includegraphics[width=0.80\textwidth]{iris_OVERALL-RIPS-8d-NORMAL_confusion_matrices.png}

\caption{Confusion matrices from classifiers' results on the Iris dataset. The filtered simplicial complexes used by TDABC classifiers were built up to a maximal dimension $q=8$.}\label{fig:iris_CM0}    
\end{center}
\end{figure}

\begin{figure}[H]
\begin{center}
\includegraphics[width=0.80\textwidth]{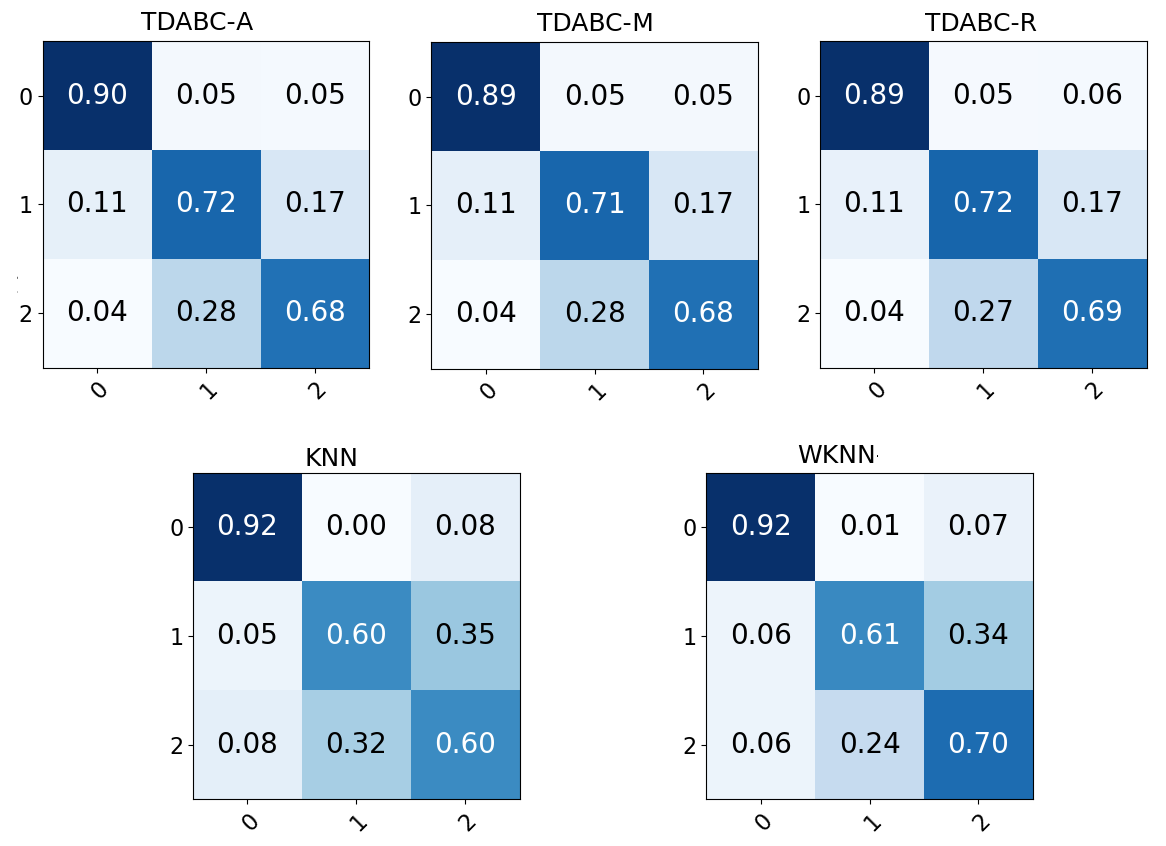}
\caption{ Confusion matrices from classifiers' results on the Wine dataset. The filtered simplicial complexes used by TDABC classifiers were built up to a maximal dimension $q=6$.}\label{fig:ALL_wine_CM0}    
\end{center}
\end{figure}
\begin{figure}[H]
\begin{center}
\includegraphics[width=0.80\textwidth]{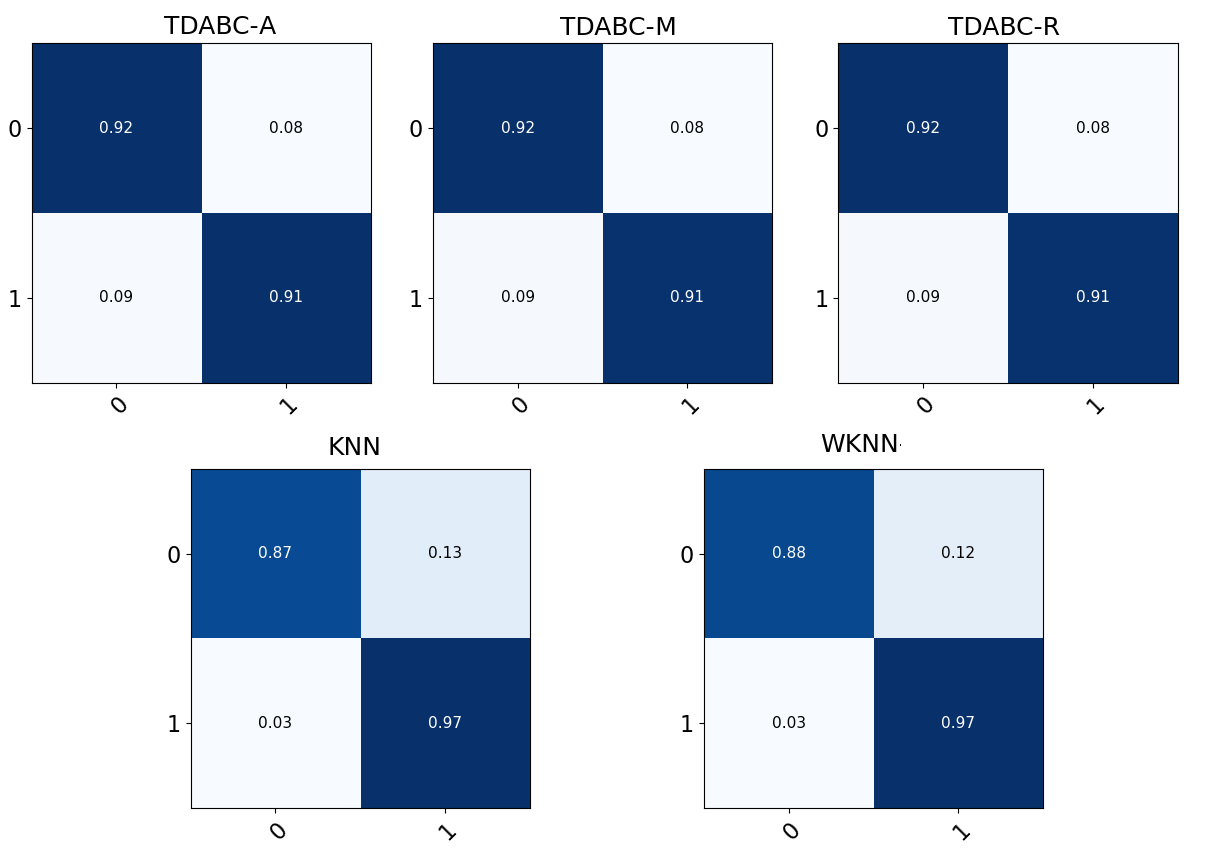}
\caption{Confusion matrices from classifiers' results on the Breast Cancer dataset. The filtered simplicial complexes used by TDABC classifiers were built up to a maximal dimension $q=4$.}\label{fig:ALL_breast_cancer_CM0}    
\end{center}
\end{figure}






\bibliography{references.bib}
\end{document}